\newcommand{\Real}{\mathbb{R}}
\newcommand{\Natural}{\mathbb{N}}
\newcommand{\new}{\operatorname{new}}
\newcommand{\bfc}{c}
\newcommand{\bfm}{\boldsymbol{m}}
\newcommand{\bfv}{v}
\newcommand{\bfx}{x}
\newcommand{\bfy}{y}
\newcommand{\bfz}{\boldsymbol{z}}
\newcommand{\triplebar}{|\kern-0.1em|\kern-0.1em|}
\newcommand{\floor}[1]{\left\lfloor#1\right\rfloor}
\newcommand{\suchthat}{:}
\newtheorem{assume}{Assumption}
\newtheorem{problem}{Problem}
\newtheorem{thm}{Theorem}
\newtheorem{definition}{Definition}
\title[Towards a mathematical understanding of learning from few examples]{Towards a mathematical understanding of learning from few examples with nonlinear feature maps}
\author{Oliver J. Sutton}
\address[O. J. Sutton]{King's College, London, UK}
\email[O. J. Sutton]{oliver.sutton@kcl.ac.uk}
\author{Alexander N. Gorban}
\address[A. N. Gorban]{University of Leicester, Leicester, UK}
\email{a.n.gorban@leicester.ac.uk}
\author{Ivan Y. Tyukin}
\address[I. Y. Tyukin]{King's College, London, UK, \emph{and} University of Leicester, Leicester, UK}
\email{ivan.tyukin@kcl.ac.uk}
\begin{document}

\begin{abstract}
We consider the problem of data classification where the training set consists of just a few data points.  
We explore this phenomenon mathematically and reveal key relationships between the geometry of an AI model's feature space, the structure of the underlying data distributions, and the model's generalisation capabilities. 
The main thrust of our analysis is to reveal the influence on the model's generalisation capabilities of nonlinear feature transformations mapping the original data into high, and possibly infinite, dimensional spaces. 
\end{abstract}

\maketitle

\section{Introduction}
The last decade has seen significant progress in the application of Artificial Intelligence (AI) and Machine Learning tools to a host of practically relevant tasks.  
The availability of data, coupled with advances in computing, have led to the emergence of capable and efficient models featuring millions of trainable parameters \cite{sandler2018mobilenetv2}, \cite{tan2021efficientnetv2}.

According to classical statistical learning theory (see e.g. \cite{Bartlett1999}, Theorem 5.2), for any binary $\{0,1\}$-valued learning machine with a finite Vapnik-Chervonenkis (VC) dimension $d$, any distribution-agnostic learning algorithm, any $\epsilon >0$, and $0<\delta<1/64$, the size $m(\epsilon,\delta)$ of the training set required to ensure that, with probability $1-\delta$, the accuracy of the trained system is at most $\epsilon$-away from the best accuracy possible for this machine must satisfy
\begin{equation}\label{eq:low_bound_1}
m(\epsilon,\delta)\geq \frac{d}{320 \epsilon^2}.
\end{equation}
At the same time, according to \cite{Bartlett1999}, Theorem 8.9, the VC dimension $d$ for a class neural networks with $L$ layers, $W$ parameters, a single threshold output, and activation functions $f:\Real\rightarrow \Real$, $\lim_{s\rightarrow -\infty}f(s)=0$, $\lim_{s\rightarrow \infty}f(s)=1$ that are differentiable at some $s_0$ with $f'(s_0)\neq 0$ is known to be bounded from below by\footnote{Bound (\ref{eq:low_bound_2}) can be straightforwardly extended to networks with ReLU activation functions $f(s)=\max\{0,s\}$ by noticing that the difference of ReLU functions $f(s+b)-f(s)$, $b>0$ satisfies the assumptions of Theorem 8.9 from \cite{Bartlett1999}.}
\begin{equation}\label{eq:low_bound_2}
d\geq \floor{\frac{L}{2}}\floor{\frac{W}{2}},
\end{equation}
as long as $W\geq 10L - 14$. These inequalities suggest that accurate distribution-agnostic learning in large-scale neural networks with millions of trainable parameters may require millions of training samples -- apparently precluding the possibility of successfully learning from few examples.

Intriguingly, despite this, mounting empirical evidence points to instances when large-scale neural network models perform successfully in tasks in which the volumes of available training data do not conform to the worst-case requirements of classical Vapnik-Chervonenkis theory \cite{vapnik1999overview} or other similar combinatorial bounds. A well-known example of such a task is the classification of handwritten digits using the  MNIST digits dataset \cite{lecun2010mnist}. This dataset, being relatively small in size, can be learned remarkably well by modern large-scale deep neural networks.  This property is fascinating in its own right, especially in view of the experiments presented in \cite{zhang2016understanding}, \cite{zhang2021understanding} demonstrating that large-scale deep neural networks with identical architectures and training routines can both successfully generalise beyond training data and at the same time overfit. However, what is particularly striking, is that sometimes large-scale neural network models are capable of exhibiting extreme behaviour in comparison to worst-case bounds (\ref{eq:low_bound_1}), (\ref{eq:low_bound_2}):  {\it learning from just few examples} of objects from a new class.

To date, many successful few-shot learning schemes have been reported in the literature (see~\cite{Wang:2021FewShotReview} for a thorough review), and perhaps the best known examples of these are matching networks~\cite{vinyals2016matching} and prototypical networks~\cite{snell2017prototypical}. 
Despite the abundance of experimental confirmation of the practical feasibility of few-shot learning, a comprehensive theoretical justification of these learning schemes in large-scale models has been lacking. Although recent work \cite{bartlett2020benign} provided new relevant insights explaining the coexistence of both generalisation and  overfitting, it does not address the challenge of learning from low volumes of data.
Another relevant approach has been developed in~\cite{gorban2021high}, driven by a need to identify and correct errors made by modern high dimensional AI systems. Rather than retraining the underlying system, which may be prohibitively expensive and runs the risk of catastrophically forgetting previous training, the focus is on building simple auxiliary systems to correct~\cite{gorban2021high} or add functionality to existing AI systems.
It has been proven under certain assumptions that this approach is effective: with high probability the preexisting knowledge of the underlying system is retained and utilised when appropriate, while the new functionality is effectively learned.
This is possible because of the intrinsic properties of high dimensional spaces, where concentration of measure phenomena~\cite{ledoux2001concentration} imply the existence~\cite{Kainen:1993, Kainen:2020} and typicality~\cite{GorTyu:2016} of sets of mutually near-orthogonal points which are exponentially large in the dimension of the space.
This \emph{blessing of dimensionality} means that in sufficiently high dimensions, subsets of data points may be separated from one another with high probability using simple linear classifiers~\cite{gorban2017stochastic}.
This may be contrasted with the whole machinery of nonlinear learning algorithms which are typically required for learning in low dimensions~\cite{Gorban:2018}.

\begin{figure}
    \centering
    \includegraphics[width=0.8\textwidth]{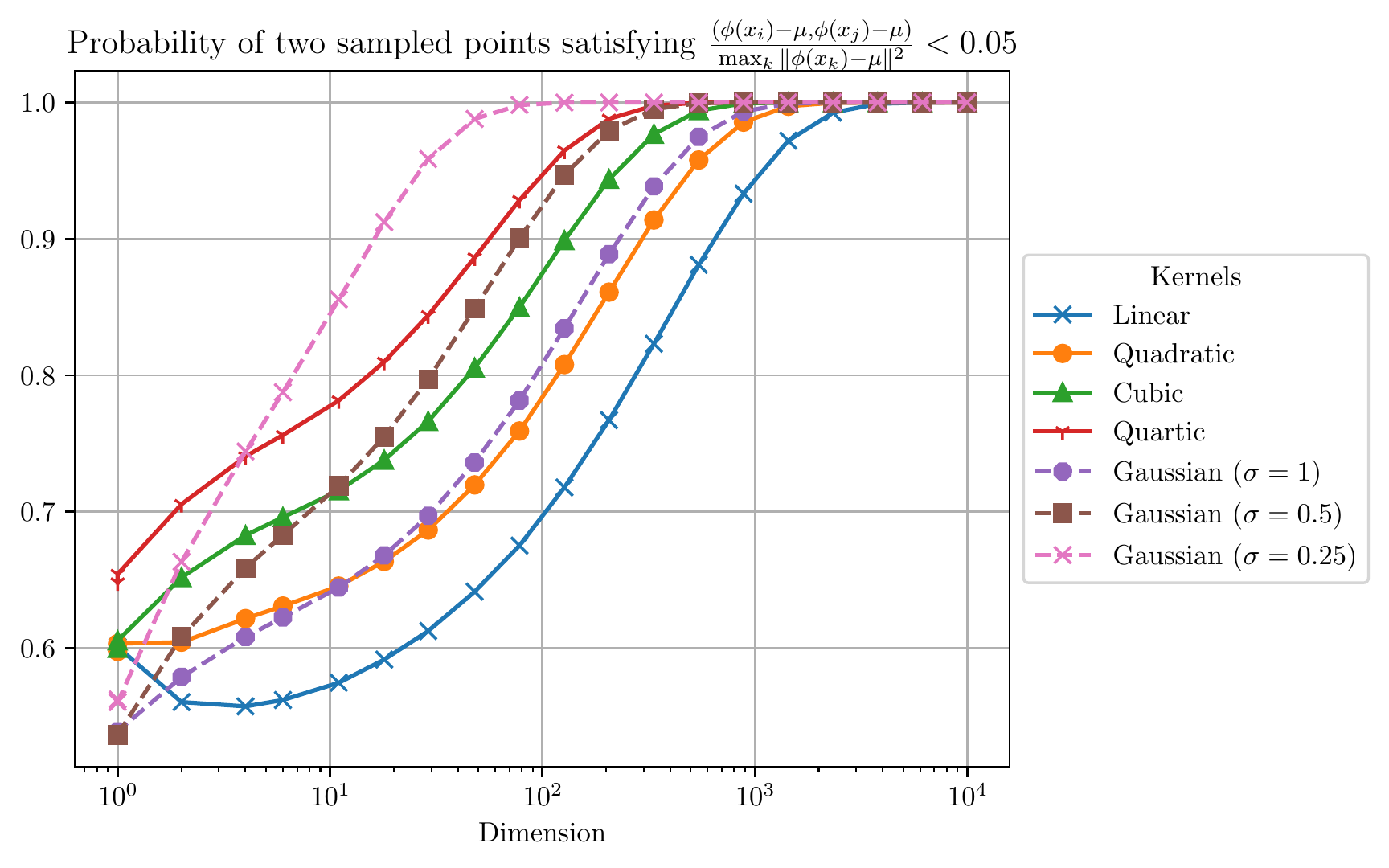}%
	\caption{Empirical demonstration of how kernels can accelerate the blessing of dimensionality by improving the degree of orthogonality of data points in the feature space associated with the kernel. In each case, $\phi$ denotes the feature map associated with the kernel, $(\cdot, \cdot)$ denotes the inner product in feature space, and $\| \cdot \|$ denotes the induced norm. The data was computed by sampling $N = 5,000$ independent uniformly distributed points $\{x_i\}_{i=1}^{N}$ from the unit ball in $\Real^d$, and $\mu = \frac{1}{N} \sum_{i=1}^{N} \phi(x_i)$ denotes the empirical mean in feature space.}
    \label{fig:kernel_separability_orthogonality}
\end{figure}

In this work we propose a new  mathematical framework building on our preliminary work~\cite{tyukin2021demystification} and aiming specifically at the analysis of the phenomenon of few-shot learning in neural networks and other models where the original input data are mapped into feature spaces via nonlinear feature maps. We ask the following natural question: can the onset of the blessing of dimensionality enabling few shot learning be accelerated by using nonlinear feature mappings to project mesodimensional datasets into higher or even infinite dimensional spaces?
This possibility is hinted at by the geometric properties of the feature spaces associated with several widely used kernels shown in Figure~\ref{fig:kernel_separability_orthogonality}, which demonstrates that normalised pairs of points sampled from these nonlinear feature spaces are very close to orthogonal.
Related work in~\cite{tyukin:kernelStochasticSeparation} has also shown that these desirable properties may be induced by nonlinear kernel methods, resulting in mapped datasets which are more highly separable.
Here we focus on understanding when this improved separability can be harnessed to facilitate \emph{learning} from few examples, and incorporate these nonlinear transformations of AI feature spaces either via an explicit feature mapping or implicitly through a kernel.
Rather than attempting to prove generalisation bounds for all possible data distributions, we investigate which properties of data distributions are relevant for facilitating few-shot learning.
This might even be a necessary refocusing of the problem in light of~\cite{bartlett2020benign, mallinar2022benign} showing that the spectrum of the data covariance matrix may hold the key to understanding benign and tempered overfitting.

Our main results (Theorems~\ref{thm:abstractFewShotLearning}, \ref{thm:lawOfHighDimension}, and ~\ref{thm:geometricProbabilities}) reveal the relationships between the data distributions and the geometry of the induced feature space enabling successful learning from few examples in a manner which will generalise to unseen data.
A key finding is that nonlinear feature mappings are beneficial for few shot learning when they ensure an appropriate combination of a high degree of orthogonality in the mapped data, a property which may be viewed as a hallmark trait of high dimensional datasets as discussed above, and localisation.
These quantities enable us to produce both upper and lower bounds (Theorem~\ref{thm:abstractFewShotLearning}) on the probabilities of successfully learning from few examples, indicating their fundamental importance for understanding few-shot learning.
Strikingly, we find that even when the regions of space occupied by each data classes are large and possibly irregular (implying that the few training samples will likely fail to capture their true extent), few shot learning may still be expected to be successful if the transformed data in each class are  also well distributed in angle around some central point.
In this setting, a very simple linear classifier in feature space has a high probability of learning and generalising.
Despite its simplicity, this linear classifier is actually very similar to those proposed and empirically studied in \cite{snell2017prototypical}. 

The paper is organised as follows. 
In Section~\ref{sec:preliminaries} we formulate the class of few-shot learning problems and the setting in which we study them. 
Section~\ref{sec:abstractResults} presents our results in an abstract setting, through which the precise relationship between the geometry of the induced feature space and the data distributions becomes evident. 
We place these abstract results into a more geometric framework in Section~\ref{sec:geometricInterpretation}, which provides a set of measurable quantities which we use to investigate the behaviour of various feature mappings in Section~\ref{sec:feature_space_estimates}.
Section \ref{sec:conclusion} concludes the paper.
A summary of some of our key notation is provided in Appendix~\ref{sec:notation}.

\section{The few-shot learning problem}\label{sec:preliminaries}

We consider the problem of \emph{few-shot learning}, i.e. learning to distinguish items of different classes based on just a few training examples, in the framework of a standard classification task.
In this framework, we assume that data points exist in the $d$-dimensional data space $\Real^d$ and there exist two sets of labels $\mathcal{L}$ and $\mathcal{L}_{new}$ such that $\mathcal{L}\cap\mathcal{L}_{new} = \emptyset$.
We further suppose that there is a previously trained classifier with classification function $F$
\begin{equation}\label{eq:classifier_general}
F: \ \Real^d \rightarrow \mathcal{L},
\end{equation}
assigning a label from the set $\mathcal{L}$ to each data point in $\Real^d$. 
The function $F$ models the existing capabilities of an AI system which was trained (possibly at great expense) before the new class labels in $\mathcal{L}_{\new}$ were available.
The key task we consider is to build a computationally cheap classifier which preserves the expertise of $F$ for data belonging to the classes in $\mathcal{L}$, yet which is also able to correctly classify data from the new classes in $\mathcal{L}_{\new}$, even from very few training examples.
For simplicity, we will typically work in the case when $\mathcal{L}_{\new}$ consists of just a single new class with label $\ell_{\new}$.

To formally state this problem, we introduce the probability distribution $P_{\mathcal{Z}}$ of legacy data-label pairs $(\bfz, \ell) \in \Real^d \times \mathcal{L}$, and the distribution $P_{\mathcal{X}}$ of new data-label pairs $(\bfx, \ell) \in \Real^d \times \mathcal{L}_{\new}$.
Associated with these are the label-agnostic marginal data distributions
\[
	P_{Z}(\bfz)=\sum_{\ell\in\mathcal{L}} P_{\mathcal{Z}}(\bfz,\ell),
	\qquad
	P_{X}(\bfx)=\sum_{\ell\in\mathcal{L}_{new}} P_{\mathcal{X}}(\bfx,\ell).
\]
Let $\mathcal{X}$ denote a finite training set of labeled data points drawn independently from $P_{\mathcal{X}}$ with the new class labels to be learned:
\[
	\mathcal{X} = \{(\bfx_i, \ell_i) \suchthat \bfx_i \in\Real^d, \ \ell_i \in \mathcal{L}_{new}\}_{i=1}^{k}.
\]
In particular, we assume that $k \ll d$, which specifies our notion of learning from ``few'' samples.

Formally, the task we consider is stated as follows (cf. \cite{tyukin2021demystification}):

\begin{problem}[Few-shot learning]\label{prob:few_shot} 
	Consider a classifier $F$ defined by~\eqref{eq:classifier_general}, trained on a sample $\mathcal{Z}$ drawn from some distribution $P_{\mathcal{Z}}$. Let $\mathcal{X}$ be a new sample that is drawn from the distribution $P_{\mathcal{X}}$ and whose cardinality $|\mathcal{X}|\ll d$.  
	Let $p_e,p_n\in(0,1]$ be given positive numbers determining the quality of learning. 

	Find an algorithm $\mathcal{A}(\mathcal{X})$ producing a new classification map 
	\[
	F_{\new}: \Real^d \rightarrow \mathcal{L}\cup \mathcal{L}_{\new},
	\]
	such that examples of class $\mathcal{L}_{\new}$ are correctly learned with probability at least $p_n$, i.e.
	\begin{equation}\label{eq:learining_from_few_1}
	P\big(F_{\new}(\bfx) \in \mathcal{L}_{\new} \big) \geq p_n,
	\end{equation}
	for $\bfx$ drawn from $P_{X}$, while $F_{\new}$ remembers the previous classifier $F$ elsewhere with probability at least $p_e$, i.e.
	\begin{equation}\label{eq:learining_from_few_2}
	P\big(F_{\new}(\bfx) = F(\bfx)\big)\geq p_e,
	\end{equation}
	for $\bfx$ drawn from the distribution $P_{Z}$.
\end{problem}

To study algorithms for tackling Problem~\ref{prob:few_shot}, we introduce a \emph{feature map} $\phi : \Real^d \rightarrow \mathbb{H}$
mapping data vectors into a Hilbert space $\mathbb{H}$ which may be either finite or infinite dimensional. 
The map $\phi$ could represent the transformation of the input data into the latent space of a deep neural network, or other relevant data transformations emerging e.g. through the application of kernel tricks or manual feature engineering.
The map $\phi$, in turn, induces a \emph{kernel}
$\kappa: \ \Real^d \times \Real^d \rightarrow \Real$,
given by
$
\kappa(\bfx,\bfy) = (\phi(\bfx),\phi(\bfy)).
$
Examples of feature maps $\phi$ include the identity map $\phi(\bfx)=\bfx$, and those associated with standard polynomial ($\kappa(\bfx,\bfy)=(\bfx \cdot \bfy + 1)^k$, $k=1,2,\dots$), or Gaussian ($\kappa(\bfx,\bfy)=\exp(-\frac{1}{2\sigma^2}\|\bfx-\bfy\|^2)$, $\sigma>0$) 
kernels, discussed in Section~\ref{sec:feature_space_estimates}.

Theorem~\ref{thm:abstractFewShotLearning} presents a simple algorithm for solving the few-shot learning problem, based on a linear classifier in feature space, and provides estimates for the probability of the success of this algorithm.
The probability estimates are explicit in the sample size $k$ and provide a guide to the design of feature spaces in which learning may successfully occur.

\section{Abstract theory of a class of few-shot learning approaches}\label{sec:abstractResults}
We consder the problem of few-shot learning in an abstract setting initially.
To concisely state our results, we introduce various probability functions in Definition~\ref{def:probabilityFunctions} which measure the prevalence of specific quantities of interest.
The algorithms we propose do not require the evaluation of these functions, which are not known in practice, although they appear in the estimates for the probabilities of success.
As such, their magnitudes provide information on the situations in which successful learning can be expected to occur.
In Sections~\ref{sec:geometricInterpretation} and~\ref{sec:feature_space_estimates} we provide concrete estimates of these functions for various common feature maps.

Throughout this section, fix $k$ as a positive integer, and let $x, y \in \Real^d$ and the set $\{x_i\}_{i=1}^{k} \subset \Real^d$ all be independently sampled from $P_{X}$, and let $z \in \Real^d$ be an observation from $P_{Z}$ which is independent of $x, y$ and $\{x_i\}_{i=1}^{k}$.
\begin{definition}[Probability functions]\label{def:probabilityFunctions}
	Let $c_{X}$ and $c_{Z}$ be arbitrary but fixed points in the feature space $\mathbb{H}$.
	We define the following shorthand notations for probabilities:
	\begin{itemize}
		\item Let $p : \Real_{\geq 0} \to [0, 1]$ denote the \emph{projection probability function}, given by
		\begin{align*}
			p(\delta) = P(x, y \sim P_{X} : (\phi(x) - c_{X}, \phi(y) - c_{X}) \leq \delta).
		\end{align*}
		\item Let $\lambda_{X}, \lambda_{X} : \Real \to [0, 1]$ denote the \emph{localisation probability functions} for $P_{X}$ and $P_{Z}$, given by
		\begin{align*}
			\lambda_{X}(r) = P(x \sim P_{X} : \|\phi(x) - c_{X}\| \leq r),
			\quad \text{and} \quad
			\lambda_{Z}(r) = P(z \sim P_{Z} : \|\phi(z) - c_{Z}\| \leq r).
		\end{align*}
		\item Let $s_{X}, s_{Z} : \Real \to [0, 1]$ denote the \emph{class separation probability functions} for $P_{X}$ and $P_{Z}$, where
		\begin{align*}
			s_{X}(\delta) = P(x \sim P_{X} : (\phi(x) - c_{X}, c_{Z} - c_{X}) \leq \delta),
		\end{align*}
		and
		\begin{align*}
			s_{Z}(\delta) = P(z \sim P_{Z} : (\phi(z) - c_{Z}, c_{X} - c_{Z}) \leq \delta).
		\end{align*}
	\end{itemize}
	Although these probabilities clearly depend on the choice of $\phi$ and the points $c_{X}$ and $c_{Z}$, we omit this from the notation for brevity.
\end{definition}

\subsection{Solution to the few-shot learning problem}

Our main result, Theorem~\ref{thm:abstractFewShotLearning} provides bounds on the probability of successfully learning from few examples in this abstract setting.
This may be viewed as an \emph{a priori} bound, in the sense that it shows the conditions under which successful few-shot learning may be expected, even though the bounds themselves may not be evaluated accurately from small data samples.
In particular, success may be expected when the probabilities $\lambda_{X}, \lambda_{Z}, s_{X}, s_{Z}$ and $p$ are sufficiently large, which geometrically corresponds to the case when 
\begin{enumerate}
\item[(i)] the two classes are well separated
\item[(ii)] the points in each class are tighly clustered together and
\item[(iii)] the points from the new class are well spread in angle around the centre point $c_{X}$.
\end{enumerate}
Condition (i) may be thought of as measuring the well-posedness of the problem, while conditions (ii) and (iii) ensure that the centre $c_{X}$ can be well estimated by the empirical mean $\mu$ of just the few available samples (see Theorem~\ref{thm:lawOfHighDimension}).

The formulation of the estimate incorporates various trade-off parameters which arise in the analysis, allowing some terms to ease the burden of others.
The results take the form of suprema over these parameters, which implies that the estimate is valid for any choice of values of these parameters, although some choices may provide a more favourable estimate than others.
The role of these parameters may be described as follows:
\begin{itemize}
	\item $\theta \in \Real$: Threshold parameter for the classifier. 
		This is a user-selected parameter which defines the separating hyperplane used as the classifier. 
		When $\theta = 0$ the separating hyperplane passes through the empirical mean of the new class data points. 
		Selecting a negative value for $\theta$ moves the hyperplane towards the old classes.
	\item $\xi, \eta$ as defined below, depending on the other parameters. 
		These control the required degree of class separation. 
		Ideally we want to select the other parameters (including $\theta$) so that both of these are large and positive, to provide a high probability that the projections of the two datasets onto the line joining $c_{Z}$ and $c_{X}$ are less than $\eta$ (for points drawn from $P_{X}$) and $\xi$ (for points drawn from $P_{Z}$), respectively.
	\item $a \geq 0$: Localisation of empirical mean of new points to new class centre. 
		For tight bounds we want a high probability that $\|\mu - c_{X}\| \leq a$.
		Consequently, we want to pick this parameter to be as big as possible to ensure this probability is high, although doing so restricts the possible range of values for $\theta$.
	\item $b, \beta \geq 0$: Localisation of points from each class about their centres. 
		For tight bounds we want a high probability that $\|\phi(x) - c_{X}\| \leq b$ for points $x$ drawn from $P_{X}$, and that $\|\phi(z) - c_{Z}\| \leq \beta$ for points $z$ drawn from $P_{Z}$.
		We therefore want to pick this parameter to be as big as possible, to allow points to be far from the centres, although doing so will restrict the range of values permissable for $\theta$.
	\item $\gamma, \epsilon > 0$: Tradeoff parameters. 
		Increasing $\gamma$ allows the requirements on the mean convergence to be relaxed when there is a large distance between the centres. 
		Increasing $\epsilon$ allows the points to be more distant from the centre of each class when the empirical mean may be expected to be close to the centre of the new class (e.g. when the points are very symmetrically distributed around $c_{X}$ in angle).
\end{itemize}

The term $P(\|\mu - c_{X}\| \leq a)$ appearing in the bound is treated separately in Theorem~\ref{thm:lawOfHighDimension}, since there are in principle many different bounds available for such a term depending on the nature and knowledge of the distribution $P_{X}$.
A complete bound may therefore be obtained by combining the results of Theorems~\ref{thm:abstractFewShotLearning} and~\ref{thm:lawOfHighDimension}.

\usetikzlibrary{arrows}
\usetikzlibrary{calc}
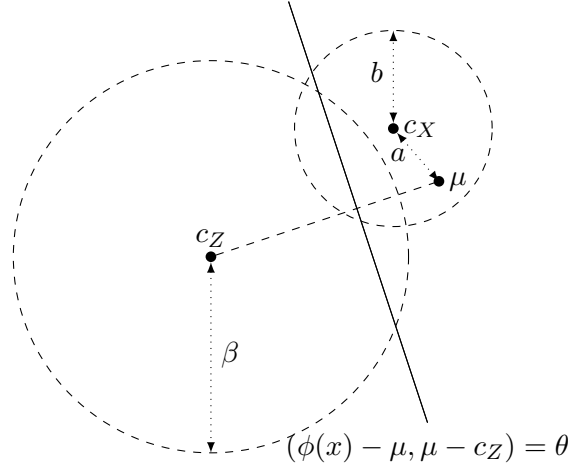
\begin{figure}
	\centering
	\begin{tikzpicture}[>=triangle 45]
		\tikzset{shorten <= 2pt}
		\tikzset{>=latex}
		\coordinate (cX) at (0, 0);
		\coordinate (mu) at (3, 1);
		\coordinate (cY) at ($(mu) + (-0.6, 0.7)$);
		\coordinate (cYmuMid) at ($(mu) + (-0.3, 0.35)$);
		\coordinate (planeStart) at (1.0, 3.45);
		\coordinate (planeEnd) at (2.85, -2.2);
		\coordinate (rXEnd) at ($(cX) + (0, -2.6)$);
		\coordinate (rXMid) at ($(cX) + (0, -1.3)$);
		\coordinate (rYEnd) at ($(cY) + (0, 1.3)$);
		\coordinate (rYMid) at ($(cY) + (0, 0.75)$);
		\fill (mu) circle [fill, radius=2pt, anchor=west] node[anchor=west] {$\mu$};
		\fill (cX) circle [fill, radius=2pt, anchor=west] node[anchor=south] {$c_{Z}$};
		\fill (cY) circle [fill, radius=2pt, anchor=west] node[anchor=west] {$c_{X}$};
		\draw[dashed] (cX) circle [fill, radius=2.6, anchor=west];
		\draw[dashed] (cY) circle [fill, radius=1.3, anchor=west];
		\draw[dashed] (cX) -- (mu);
		\draw (planeStart) -- (planeEnd);
		\draw (planeEnd) node[anchor=north] {$(\phi(x) - \mu, \mu - c_{Z}) = \theta$};
		\draw (planeStart) -- (planeEnd);
		\draw[<->, dotted] (cX) -- (rXEnd);
		\draw (rXMid) node[anchor=west] {$\beta$};
		\draw[<->, dotted] (cY) -- (rYEnd);
		\draw (rYMid) node[anchor=east] {$b$};
		\draw[<->, dotted] (cY) -- (mu);
		\draw (cYmuMid) node[anchor=east] {$a$};
	\end{tikzpicture}
	\caption{Illustration of the classifier defined in Theorem~\ref{thm:abstractFewShotLearning} and the interpretation of the parameters appearing the estimate.}
\end{figure}

\begin{thm}[Few shot learning]\label{thm:abstractFewShotLearning}
	Suppose that we are in the setting of the few-shot learning problem specified in Problem~\ref{prob:few_shot}, and let $\{x_{i}\}_{i=1}^{k} \subset \Real^d$ be independent training samples from the new data distribution $P_{X}$.
	For $\theta \in \Real$, construct the classifier
	\begin{equation}\label{eq:linearClassifier}
		F_{\new}(x) = 
		\begin{cases}
			\ell_{\new} &\text{ for } (\phi(x) - \mu, \mu - c_{Z}) \geq \theta,
			\\
			F(x) &\text{ otherwise},
		\end{cases}
	\end{equation}
	where $\mu = \frac{1}{k} \sum_{i=1}^{k} \phi(x_i)$ is the mean of the new class training samples in feature space.

	Then, with respect to samples $x$ and $\{x_{i}\}_{i=1}^{k}$ drawn independently from the distribution $P_{X}$, the probability $P(F_{\new}(x) = \ell_{\new})$ that this classifier has correctly learned and will generalise well to the new class is at least
	\begin{equation}\label{eq:learningNewClasses}
		P(F_{\new}(x) = \ell_{\new}) 
		\geq 
		\sup_{a, b, \gamma, \epsilon > 0}
		P(\|\mu - c_{X}\| \leq a)
		\{\lambda_{X}(b) + s_{X}(\eta) - 1 \}_{+},
	\end{equation}
	where $\eta = -\theta - \frac{1}{2\gamma}\|c_{X} - c_{Z}\|^2 - \frac{\epsilon + \gamma + 2}{2} a^2 - \frac{b^2}{2\epsilon}$, and this estimate is symmetric in the sense that the same terms provide an upper bound on the probability, i.e.
	\begin{equation}\label{eq:learningNewClasses:upper}
		P(F_{\new}(x) = \ell_{\new}) 
		\leq 
		1 - 
		\sup_{a, b, \gamma, \epsilon > 0}
		(1 - P(\|\mu - c_{X}\| \leq a))
		\{ 1 - \lambda_{X}(b) - s_{X}(\eta) \}_{+}.
	\end{equation}

	Moreover, with respect to samples $z$ drawn from $P_{Z}$ and $\{x_{i}\}_{i=1}^{k}$ drawn independently from $P_{X}$,
	the probability $P(F_{\new}(z) = F(z))$ that the classifier can correctly distinguish the original classes and so will retain its previous learning is at least
	\begin{equation}\label{eq:rememberingOldLearning}
		P(F_{\new}(z) = F(z)) 
		\geq 
		\sup_{a, \beta, \gamma, \epsilon > 0}
		P(\|\mu - c_{X}\| \leq a)
		\{\lambda_{Z}(\beta) + s_{Z}(\xi) - 1\}_{+},
	\end{equation}
	where 
	$\xi = \theta + \Big( 1 - \frac{1}{\gamma} \Big) \|c_{X} - c_{Z}\|^2 - \frac{\epsilon + \gamma - 2}{2} a^2 - \frac{\beta^2}{2\epsilon}$,
	and this estimate is  symmetric in the sense that
	\begin{equation}\label{eq:rememberingOldLearning:upper}
		P(F_{\new}(z) = F(z)) 
		\leq 
		1 -
		\sup_{a, \beta, \gamma, \epsilon > 0}
		(1 - P(\|\mu - c_{X}\| \leq a))
		\{1 - \lambda_{Z}(\beta) - s_{Z}(\xi)\}_{+}.
	\end{equation}
\end{thm}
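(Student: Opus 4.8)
\section*{Proof proposal}

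The plan is to reduce all four inequalities to elementary manipulations of a single scalar margin, $(\phi(x)-\mu,\mu-c_Z)$ for the new class and $(\phi(z)-\mu,\mu-c_Z)$ for the old classes, estimated on a handful of ``good'' events whose probabilities are precisely the functions of Definition~\ref{def:probabilityFunctions}, and to exploit throughout that $\mu=\tfrac1k\sum_i\phi(x_i)$ depends only on the training sample $\{x_i\}$ and is therefore independent of the fresh observation $x$ (resp.\ $z$).

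To prove the lower bound~\eqref{eq:learningNewClasses} I would first insert $\pm c_X$ and expand,
\[
(\phi(x)-\mu,\mu-c_Z)=(\phi(x)-c_X,\mu-c_X)+(\phi(x)-c_X,c_X-c_Z)-\|\mu-c_X\|^2-(\mu-c_X,c_X-c_Z).
\]
On the event $\{\|\mu-c_X\|\le a\}\cap\{\|\phi(x)-c_X\|\le b\}\cap\{(\phi(x)-c_X,c_Z-c_X)\le\eta\}$ I bound each term from below: the separation term equals $-(\phi(x)-c_X,c_Z-c_X)\ge-\eta$; the two bilinear terms are controlled by Young's inequality with the trade-off parameters $\epsilon$ and $\gamma$, namely $|(\phi(x)-c_X,\mu-c_X)|\le\tfrac1{2\epsilon}\|\phi(x)-c_X\|^2+\tfrac\epsilon2\|\mu-c_X\|^2$ and $|(\mu-c_X,c_X-c_Z)|\le\tfrac\gamma2\|\mu-c_X\|^2+\tfrac1{2\gamma}\|c_X-c_Z\|^2$; and $-\|\mu-c_X\|^2\ge-a^2$. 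Collecting the coefficients of $a^2$, substituting the localisation bounds $\|\phi(x)-c_X\|^2\le b^2$ and the constant $\|c_X-c_Z\|^2$, the definition of $\eta$ is exactly what makes the resulting lower bound equal to $\theta$; hence this event is contained in $\{F_{\new}(x)=\ell_{\new}\}$. By independence its probability factorises as $P(\|\mu-c_X\|\le a)$ times $P\big(\{\|\phi(x)-c_X\|\le b\}\cap\{(\phi(x)-c_X,c_Z-c_X)\le\eta\}\big)$, and the latter factor is at least $\lambda_X(b)+s_X(\eta)-1$ by the Bonferroni-type inequality $P(E\cap F)\ge P(E)+P(F)-1$. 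Replacing the bracket by its positive part (legitimate because the left-hand side is a probability) and taking the supremum over $a,b,\gamma,\epsilon>0$ gives~\eqref{eq:learningNewClasses}. The upper bound~\eqref{eq:learningNewClasses:upper} is the mirror image: bound the same margin from above on the complementary events, using the reverse Young's and Cauchy--Schwarz estimates, to exhibit an event contained in $\{F_{\new}(x)\neq\ell_{\new}\}$, then apply independence and the Bonferroni inequality to the complementary events and interchange $P\leftrightarrow 1-P$; the terms are arranged so that the threshold entering $s_X$ is again $\eta$, which is the asserted symmetry.

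The ``remembering'' estimates~\eqref{eq:rememberingOldLearning} and~\eqref{eq:rememberingOldLearning:upper} follow the same recipe for $z\sim P_Z$, noting that $\{F_{\new}(z)=F(z)\}=\{(\phi(z)-\mu,\mu-c_Z)<\theta\}$, so the ``good'' direction is now the upper bound on the margin. Here I would expand around $c_Z$, writing $(\phi(z)-\mu,\mu-c_Z)=(\phi(z)-c_Z,\mu-c_Z)-\|\mu-c_Z\|^2$ and then $\mu-c_Z=(\mu-c_X)+(c_X-c_Z)$, so that the localisation of $\mu$ to $c_X$, the localisation of $\phi(z)$ to $c_Z$ (governed by $\lambda_Z$), and the class-separation event for $P_Z$ (governed by $s_Z$) all enter; it is the extra $\|\mu-c_Z\|^2$ term, after expansion and a Young estimate of its cross term, that produces the $(1-\tfrac1\gamma)\|c_X-c_Z\|^2$ contribution and the sign pattern of $\xi$ that differs from $\eta$. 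Independence, the Bonferroni inequality and the positive part then yield the lower bound, and the complementary argument yields the upper bound exactly as before.

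I expect the main obstacle to be the algebraic bookkeeping: choosing the groupings of the Young's-inequality parameters so that the $a^2$, $b^2$ (resp.\ $\beta^2$) and $\|c_X-c_Z\|^2$ contributions collapse to exactly the stated thresholds $\eta$ and $\xi$, and so that the lower and upper estimates remain structurally identical term by term. Once these groupings are fixed, the probabilistic ingredients — independence of the $\mu$-event from the fresh sample, the inequality $P(E\cap F)\ge P(E)+P(F)-1$, and the passage to the positive part and the supremum — are entirely routine.
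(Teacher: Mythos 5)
Your proposal is correct and follows essentially the same route as the paper's proof: the same expansion of $(\phi(x)-\mu,\mu-c_{Z})$ around $c_{X}$ (and of $(\phi(z)-\mu,\mu-c_{Z})$ around $c_{Z}$ with $\mu-c_{Z}=(\mu-c_{X})+(c_{X}-c_{Z})$), the same Cauchy--Schwarz/Young estimates with parameters $\epsilon,\gamma$ producing exactly $\eta$ and $\xi$, the same three events with probabilities $P(\|\mu-c_{X}\|\leq a)$, $\lambda_{X}(b)$ (resp. $\lambda_{Z}(\beta)$), $s_{X}(\eta)$ (resp. $s_{Z}(\xi)$), combined via independence of $\mu$ from the fresh sample and the bound $P(E\cap F)\geq P(E)+P(F)-1$, with the positive part and supremum taken at the end. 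Your treatment of the upper bounds via the complementary events and the swap $P\leftrightarrow 1-P$ is also the paper's argument, so no changes are needed.
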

\begin{proof}
	Let $x \sim P_{X}$ be independent of $\{x_i\}_{i=1}^{k}$, and therefore also of $\mu$.
	For brevity, let $d = c_{X} - c_{Z}$ denote the distance between the centres.
	Expanding, we find that the condition $(\phi(x) - \mu, \mu - c_{Z}) \geq \theta$ is equivalent to
	\begin{align*}
		(\phi(x) - c_{X}, d) + (\phi(x) - c_{X}, \mu - c_{X}) - (\mu - c_{X}, d) - \|\mu - c_{X}\|^2 \geq \theta.
	\end{align*}
	Letting $T = (\phi(x) - c_{X}, \mu - c_{X}) - (\mu - c_{X}, d) - \|\mu - c_{X}\|^2$, the Cauchy-Schwarz inequality implies that
	\[
		T
		\geq
		- \|\phi(x) - c_{X}\| \|\mu - c_{X}\| - \|\mu - c_{X}\| \|d\| - \|\mu - c_{X}\|^2,
	\]
	and, applying Young's inequality $ab \leq \frac{1}{2\epsilon} a^2 + \frac{\epsilon}{2}b^2$, valid for any $a, b \in \Real$ and $\epsilon > 0$, to the two product terms using arbitrary parameters $\gamma, \epsilon > 0$, it follows that
	\[
		T
		\geq 
		- \frac{1}{2\epsilon} \|\phi(x) - c_{X}\|^2 
		- \frac{1}{2\gamma} \|d\|^2 
		- \frac{2 + \epsilon + \gamma}{2} \|\mu - c_{X}\|^2.
	\]
	Consequently, if $x$ and $\{x_i\}_{i=1}^{k}$ are such that
	\[
		(\phi(x) - c_{X}, d) 
		- \frac{1}{2\epsilon} \|\phi(x) - c_{X}\|^2 
		- \frac{2 + \epsilon + \gamma}{2} \|\mu - c_{X}\|^2
		\geq
		\theta + \frac{1}{2\gamma} \|d\|^2,
	\]
	then it follows that $(\phi(x) - \mu, \mu - c_{Z}) \geq \theta$.
	Recalling that the points $x$ and $\{x_i\}_{i=1}^{k}$ are drawn independently from $P_{X}$, for any $a, b \geq 0$, we introduce the events
	\begin{align*}
		A: \|\mu - c_{X}\|^2 &\leq a^2,
		\qquad
		B: \|\phi(x) - c_{X}\|^2 \leq b^2,
		\\
		C: ( \phi(x) - c_{X}, d ) &\leq -\theta - \frac{1}{2\gamma}\|c_{Z} - c_{X}\|^2 - \frac{\epsilon + \gamma + 2}{2} a^2 - \frac{b^2}{2\epsilon}.
	\end{align*}
	Since event $A$ is independent of events $B$ and $C$, we conclude from the union bound that
	\[
		P((\phi(x) - \mu, \mu - c_{Z}) \geq \theta) \geq P(A \wedge B \wedge C) \geq P(A)\{P(B) + P(C) - 1\}_{+}.
	\]
	The result~\eqref{eq:learningNewClasses} then follows since $s_{X}$ is precisely $P(C)$, $\lambda_{X}$ provides $P(B)$.
	Furthermore, it follows that $\operatorname{not}A \wedge \operatorname{not}B \wedge \operatorname{not}C \Rightarrow (\phi(x) - \mu, \mu - c_{Z}) \leq \theta$,
	and consequently
	\[
		P((\phi(x) - \mu, \mu - c_{Z}) \leq \theta) 
		\geq 
		P(\operatorname{not}A \wedge \operatorname{not}B \wedge \operatorname{not}C) 
		\geq 
		(1 - P(A))\{1 - P(B) - P(C)\}_{+},
	\]
	implying that
	$
		P((\phi(x) - \mu, \mu - c_{Z}) \geq \theta)
		\leq 
		1 - (1 - P(A))\{1 - P(B) - P(C)\}_{+}.
	$
	Combined with the arguments above, this proves~\eqref{eq:learningNewClasses:upper}

	To prove the bounds on $P(F_{\new}(z) = F(z))$, let $z \sim P_{Z}$ and expand
	\[
		(\phi(z) - \mu, \mu - c_{Z})
		=
		(\phi(z) - c_{Z}, d) + (\phi(z) - c_{Z}, \mu - c_{X}) - \|\mu - c_{Z}\|^2,
	\]
	from which, arguing as before, we find that for any $\gamma, \epsilon > 0$,
	\[
		(\phi(z) - \mu, \mu - c_{Z})
		\leq
		(\phi(z) - c_{Z}, d)
		+ \frac{1}{2\epsilon} \|\phi(z) - c_{Z}\|^2
		+ \Big( \frac{\epsilon}{2} + \gamma - 1 \Big) \|\mu - c_{X}\|^2
		+ \Big( \frac{1}{\gamma} - 1 \Big) \| d \|^2.
	\]
	Consequently, we conclude that if
	\[
		(\phi(z) - c_{Z}, d)
		+ \frac{1}{2\epsilon} \|\phi(z) - c_{Z}\|^2
		+ \Big( \frac{\epsilon}{2} + \gamma - 1 \Big) \|\mu - c_{X}\|^2
		\leq 
		\theta
		+ \Big( 1 - \frac{1}{\gamma} \Big) \| d \|^2,
	\]
	then it follows that $(\phi(z) - \mu, \mu - c_{Z}) \leq \theta$.
	Let  $a, \beta \geq 0$ be arbitrary and consider the events
	\begin{align*}
		A : \|\mu - c_{X}\|^2 &\leq a^2,
		\qquad
		B : \|\phi(z) - c_{Z}\|^2 \leq \beta^2,
		\\
		C : ( \phi(z) - c_{Z}, d) &\leq \theta + \Big( 1 - \frac{1}{\gamma} \Big) \|d\|^2 - \frac{\epsilon + \gamma - 2}{2} a^2 - \frac{\beta^2}{2\epsilon},
	\end{align*}
	which, since the event $A$ is independent of the events $B$ and $C$, are such that
	\[
		P((\phi(z) - \mu, \mu - c_{Z}) \leq \theta) \geq P(A \wedge B \wedge C) \geq P(A) \{P(B) + P(C) - 1\}_{+}.
	\]
	This provides the result~\eqref{eq:rememberingOldLearning} due to the fact that $P(C)$ and $P(B)$ are given by $s_{Z}$ and $\lambda_{Z}$ respectively.
	The upper bound~\eqref{eq:rememberingOldLearning:upper} follows by arguing as for~\eqref{eq:learningNewClasses:upper}.
\end{proof}

\subsection{Convergence of the empirical mean in terms of quasi-orthogonality and locality}\label{sec:abstractResults:meanConvergence}
The final remaining piece is to estimate the distance from the empirical mean $\mu$ of the new samples $\{x_i\}_{i=1}^{k}$ to the centre $c_{X}$, providing bounds on $P(\|\mu - c_{X}\| \leq a)$.
There are many ways to derive such a bound, and we present a particularly simple argument here which is suited to the case where we have few data points.
The result shows that, despite $c_{X}$ being arbitrary, such convergence may be expected when the probabilities $p$ and $\lambda_{X}$ are sufficiently large.

These estimates once again incorporate infima over a parameter $\delta \in \Real$, implying that a valid result may be obtained by substituting any value of $\delta$.
Selecting $\delta = s^2$ in~\eqref{eq:meanConvergence:upperBound} provides a simplified estimate since in this case $ks^2 - (k-1)\delta = s^2$.
Doing so would, however, miss a key feature of this result, which is that it shows some trade-off is possible between quasi-orthogonality and localisation when estimating $c_{X}$.
Indeed, in the case when the points $\phi(x)$ are expected to be highly orthogonal in feature space, it follows that we can expect $p(t) \approx 1$ for $t > \epsilon$ for some $0 < \epsilon \ll 1$.
Selecting a small value of $\delta = \epsilon$ in the statement~\eqref{eq:meanConvergence:upperBound} therefore allows more flexibility in the localisation of the points used to calculate $\mu$, and enables the points to spread out with rate proportional to $k^{1/2}$.
This case of highly orthogonal points may be viewed as \emph{typical} of samples from high dimensional spaces and for certain well behaved feature maps $\phi$, (as explored in Section~\ref{sec:feature_space_estimates}, cf. Figure~\ref{fig:kernel_separability_orthogonality}).

Substituting the lower bound~\eqref{eq:meanConvergence:upperBound} into~\eqref{eq:learningNewClasses} and~\eqref{eq:rememberingOldLearning}, and using the upper bound~\eqref{eq:meanConvergence:sharpness} in~\eqref{eq:learningNewClasses:upper} and~\eqref{eq:rememberingOldLearning:upper} will therefore produce full explicit bounds for the few-shot learning problem in terms of our fundamental quantities.

\begin{thm}[Convergence of the empirical mean]\label{thm:lawOfHighDimension}
	Let $s > 0$, let $\{x_{i}\}_{i=1}^{k} \subset \Real^d$ be independent samples from the distribution $P_{X}$, and define $\mu = \frac{1}{k}\sum_{i=1}^{k} \phi(x_i)$.
	Then,
	\begin{align}\label{eq:meanConvergence:upperBound}
		P(\{x_i\}_{i=1}^{k} \suchthat \|\mu - c_{X}\| \leq s) 
		\geq 
			1 - 
			\inf_{\delta \in \Real}
			\big[
				k (1 - \lambda_{X}(r(s, \delta))) 
				+
				k(k-1)(1 - p(
				\delta
				))
			\big],
	\end{align}
	where $r(s, \delta) = \{ks^2 - (k-1)\delta\}_{+}^{1/2}$
	and this estimate is symmetric in the sense that
	\begin{equation}\label{eq:meanConvergence:sharpness}
		P(\{x_i\}_{i=1}^{k} \suchthat \|\mu - c_{X}\| \leq s) 
		\leq
		\inf_{\delta \in \Real}
		\big[
		k \lambda_{X}(r(s, \delta))
		+
		k(k-1) p(
			\delta
		)
		\big].
	\end{equation}
\end{thm}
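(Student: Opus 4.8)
The plan is to work directly from the exact expansion
$k^2\|\mu - c_{X}\|^2 = \sum_{i=1}^k \|\phi(x_i) - c_{X}\|^2 + \sum_{i\neq j}(\phi(x_i)-c_{X},\,\phi(x_j)-c_{X})$,
which splits $\|\mu-c_{X}\|^2$ into a diagonal part governed by the localisation probability $\lambda_{X}$ and an off-diagonal part governed by the projection probability $p$. Once the event $\{\|\mu-c_{X}\|\leq s\}$ has been sandwiched between events built only from the individual norms $\|\phi(x_i)-c_{X}\|$ and the pairwise inner products $(\phi(x_i)-c_{X},\phi(x_j)-c_{X})$, both inequalities will follow from the union bound, using that for distinct $i,j$ the pair $(x_i,x_j)$ is a pair of independent draws from $P_{X}$, so $P(\|\phi(x_i)-c_{X}\|\leq r) = \lambda_{X}(r)$ and $P((\phi(x_i)-c_{X},\phi(x_j)-c_{X})\leq\delta) = p(\delta)$.

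For~\eqref{eq:meanConvergence:upperBound}, I would fix $\delta\in\Real$ and set $r = r(s,\delta)$. If every $\|\phi(x_i)-c_{X}\|\leq r$ and every off-diagonal inner product is $\leq\delta$, then the expansion gives $k^2\|\mu-c_{X}\|^2 \leq k r^2 + k(k-1)\delta \leq k^2 s^2$, the last step being exactly the definition of $r(s,\delta)$ (in the degenerate case $ks^2-(k-1)\delta<0$ we have $r=0$, which forces every $\phi(x_i)=c_{X}$ and hence $\mu=c_{X}$, so the implication is trivial). Passing to complements, $\{\|\mu-c_{X}\|>s\}$ is contained in the union of the $k$ events $\{\|\phi(x_i)-c_{X}\|>r\}$ and the $k(k-1)$ events $\{(\phi(x_i)-c_{X},\phi(x_j)-c_{X})>\delta\}$ over $i\neq j$; the union bound then gives $P(\|\mu-c_{X}\|\leq s)\geq 1 - k(1-\lambda_{X}(r)) - k(k-1)(1-p(\delta))$, and taking the supremum over $\delta$ yields~\eqref{eq:meanConvergence:upperBound}.

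For the matching bound~\eqref{eq:meanConvergence:sharpness} I would run the same identity in reverse, writing $\sum_{i\neq j}(\phi(x_i)-c_{X},\phi(x_j)-c_{X}) = k^2\|\mu-c_{X}\|^2 - \sum_i\|\phi(x_i)-c_{X}\|^2$. If $\|\mu-c_{X}\|\leq s$ while simultaneously every $\|\phi(x_i)-c_{X}\|>r(s,\delta)$, then the right-hand side is strictly less than $k^2 s^2 - k\, r(s,\delta)^2 = k(k-1)\delta$, so at least one of the $k(k-1)$ off-diagonal terms must be $<\delta$. Hence $\{\|\mu-c_{X}\|\leq s\}$ is contained in the union of $\{\exists\, i: \|\phi(x_i)-c_{X}\|\leq r(s,\delta)\}$ and $\{\exists\, i\neq j: (\phi(x_i)-c_{X},\phi(x_j)-c_{X})\leq\delta\}$, and the union bound gives $P(\|\mu-c_{X}\|\leq s)\leq k\lambda_{X}(r(s,\delta)) + k(k-1)p(\delta)$ for every $\delta$, i.e.~\eqref{eq:meanConvergence:sharpness}.

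Since this is really just an exact expansion of a squared norm followed by union bounds, I do not anticipate a genuine obstacle. The only care needed is in the bookkeeping around the positive-part truncation in $r(s,\delta)$ — checking that the two implications above survive the degenerate case $ks^2-(k-1)\delta\leq 0$, and that $\delta$ may genuinely range over all of $\Real$, including negative values, which is what makes the trade-off between quasi-orthogonality and localisation visible — together with keeping the diagonal ($k$ terms) and off-diagonal ($k(k-1)$ terms) index sets counted consistently so that the constants in the two bounds line up.
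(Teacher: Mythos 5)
Your proposal is correct and follows essentially the same route as the paper: expand $k^2\|\mu-c_X\|^2$ into its diagonal and off-diagonal parts, bound the event $\{\|\mu-c_X\|\leq s\}$ between events defined by the individual norms and pairwise inner products, and apply union bounds in both directions (your contrapositive phrasing of the upper bound is just the paper's ``not $A_i$, not $B_{ij}$'' implication). Your explicit treatment of the degenerate case $ks^2-(k-1)\delta\leq 0$ is a small point of extra care beyond what the paper records, but it changes nothing substantive.
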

\begin{proof}
	Expanding, we find that
	\begin{align*}
		\|\mu - c_{X}\|^2 
		&= 
		\frac{1}{k^2}\sum_{i = 1}^{n}\|\phi(x_i) - c_{X}\|^2 + \frac{1}{k^2}\sum_{\substack{i, j = 1, i \neq j}}^{k} (\phi(x_i) - c_{X}, \phi(x_j) - c_{X}).
	\end{align*}
	Let $\delta \in \Real$ be arbitrary, and for each $i$ and $j$, let $A_i$ be the event that $\|\phi(x_i) - c_{X}\|^2 \leq ks^2 - (k-1)\delta$ and let $B_{ij}$ be the event that $(\phi(x_i) - c_{X}, \phi(x_j) - c_{X}) \leq \delta$. 
	Then, when all these inequalities hold, the expansion above implies that
	$$
		\|\mu - c_{X}\|^2 \leq k\frac{ks^2 - (k-1)\delta}{k^2} + k(k-1)\frac{\delta}{k^2} = s^2,
	$$
	since the second sum contains $k(k-1)$ terms.
	Consequently the event $E = \bigwedge_{i = 1}^{n} A_i \wedge \bigwedge_{i, j = 1, i \neq j}^{n} B_{ij}$ implies $\|\mu - c_{X}\| \leq s$, and therefore
	\begin{align*}
		P(\|\mu - c_{X}\| \leq s) 
		\geq P(E)
		&\geq 
		1 - \sum_{i = 1}^{k} P(\operatorname{not} A_i)
		- 
		\sum_{\substack{i, j = 1, i \neq j}}^{k} 
		P(\operatorname{not} B_{ij}).
	\end{align*}
	The result~\eqref{eq:meanConvergence:upperBound} then follows from the definitions of $\lambda_{X}$ and $p$ and recalling that $\delta$ was arbitrary.
	
	To prove~\eqref{eq:meanConvergence:sharpness}, 
	note that $(\bigwedge_{i = 1}^{n} \operatorname{not} A_i \wedge \bigwedge_{i, j = 1, i \neq j}^{n} \operatorname{not} B_{ij}) \Rightarrow \|\mu - c_{X}\| > s$
	and consequently,
	\begin{align*}
		P(\|\mu - c_{X}\| > s) 
		&
		\geq
			1
			-
			\sum_{i=1}^{k} P(A_{i})
			-
			\sum_{\substack{i, j = 1, i \neq j}}^{k} 
				P(B_{ij})
	\end{align*}
	The result~\eqref{eq:meanConvergence:sharpness} then follows from the definitions of $\lambda_{X}$ and $p$ as before, since $P(\| \mu - c_{X} \| \leq s) = 1 - P(\| \mu - c_{X} \| > s)$ and $\delta$ was arbitrary.
\end{proof}

\section{Geometric interpretation of results}\label{sec:geometricInterpretation}
The abstract results in Section~\ref{sec:abstractResults} can be interpreted in a geometric setting, in which the roles of the nonlinear feature map and ambient space dimension become clear.
To describe the behaviour of feature maps in a unified manner, we introduce the function $V_{\phi} : \mathbb{H} \times \Real \to \Real_{\geq 0}$ which measures the volume of the pre-image of a ball in feature space, given by
\begin{align}\label{eq:ballPreimageVolume}
	V_{\phi}(\bfc, r) = \int_{\{ \bfx \, \suchthat \, \|\phi(\bfx) - \bfc\| \leq r \} } 1 d\bfx.
\end{align}
Similarly, we use the function $C_{\phi} : \mathbb{H} \times \mathbb{H} \times \Real \times \Real \to \Real_{\geq 0}$ to measure the volume of the pre-image of a spherical cap in feature space, defined as
\begin{align}\label{eq:capPreimageVolume}
	C_{\phi}(\bfc, \bfv, r, t) = \int_{\{ \bfx \, \suchthat \, \|\phi(\bfx) - \bfc\| \leq r \text{ and } (\phi(\bfx) - \bfc, \bfv - \bfc) \geq t\}} 1 d \bfx,
\end{align}
and we note that the spherical cap may be enveloped in a sphere, implying the trivial estimate
\begin{equation}\label{eq:capInSphereEstimate}
	C_{\phi}(\bfc, \bfv, r, t) \leq V_{\phi}(\bfc + t ( v - c), (r^2 - t^2 \|v - c\|^2)^{1/2}).
\end{equation}

We focus on data distributions satisfying the following growth bounds, which may be regarded as a generalisation of the smeared absolute continuity condition~\cite{AugmentedAI, tyukin2021demystification}.
We stress that the choice of $\bfc \in \mathbb{H}$ here is arbitrary, provided that such a ball $\mathcal{S}$ exists, and that the constant $A$ is free to depend on $\bfc$.

\begin{definition}[Distribution bounded in feature space]\label{def:bounded_distribution}
	A probability distribution 
	$P$ on $\Real^d$ is said to be \emph{bounded in feature space} with respect to the feature mapping $\phi$ if $P$ admits a density function $\rho : \Real^{d} \to \Real_{\geq 0}$, there exist a centre $\bfc \in \mathbb{H}$ and radius $r > 0$ such that $\rho$ is only supported within the set
	\[
		\operatorname{supp} \rho \subset \mathcal{S} = \{\bfx\in\Real^d \suchthat \|\phi(\bfx) - \bfc\| \leq r \}, \quad V_{d}(\mathcal{S}) = V_{\phi}(\bfc, r) > 0,
	\]
	and there exists a constant scaling $A > 0$ such that
	\begin{equation}\label{eq:def:density_growth_bound}
		\rho(\bfx) 
		\leq 
		\frac{A}
		{V_{\phi}(\bfc, r)}.
	\end{equation}
\end{definition}

The assumptions of Definition~\ref{def:bounded_distribution} require that the distribution does not have pathological concentrations. 
If such concentrations are present and identified then they can be isolated and potentially dealt with separately.
In the case of a linear feature map $\phi$, Definition~\ref{def:bounded_distribution} reduces to the condition that $\rho$ is supported in a ball and does not have singularities.

\begin{assume}\label{assume:distributionsBounded} 
	We assume that the distributions $P_{X}$ and $P_{Z}$ satisfy Definition~\ref{def:bounded_distribution} with densities $\rho_{X}$ and $\rho_{Z}$, centres $\bfc_{X}$ and $\bfc_{Z}$, radii $r_{X}$ and $r_{Z}$, and scalings $A_{X}$ and $A_{Z}$ respectively.
\end{assume}

In this context, we may write the terms used in the abstract estimates of Section~\ref{sec:abstractResults} in a geometric language.
The estimates for $\lambda_{X}, \lambda_{Z}, s_{X}$ and $s_{Z}$ are exact when $A_{X} = A_{Z} = 1$, corresponding to the case when $P_{X}$ and $P_{Z}$ are uniform distributions.

\begin{thm}[Geometric forms of probabilities]\label{thm:geometricProbabilities}
	Suppose that Assumption~\ref{assume:distributionsBounded} holds, let $V_{\phi}$ and $C_{\phi}$ be defined as in~\eqref{eq:ballPreimageVolume} and~\eqref{eq:capPreimageVolume} respectively, and let the probability functions $p$, $\lambda_{X}$, $\lambda_{Z}$, $s_{X}$ and $s_{Z}$ be as in Definition~\ref{def:probabilityFunctions}.
	Then, the projection probability function $p$ satisfies
	\begin{align}\label{eq:projectionProbabilityEstimate}
		p(\delta)
		\in
		\Big[
			1 - 
			\sup_{\bfy \in \Real^d}
			A_{X} \frac{C_{\phi}(\bfc_{X}, \phi(\bfy), r_{X}, \delta)}{V_{\phi}(\bfc_{X}, r_{X})}
			,
			\sup_{\bfy \in \Real^d}
			A_{X} \Big( 1 - \frac{C_{\phi}(\bfc_{X}, \phi(\bfy), r_{X}, \delta)}{V_{\phi}(\bfc_{X}, r_{X})} \Big)
		\Big]
		\cap [0, 1],
	\end{align}
	the localisation probability function $\lambda_{X}$ may be estimated by
	\begin{align}\label{eq:ballProbabilityEstimate}
		\lambda_{X}(r) 
		\in \Big[
			1 - A_{X} \Big(1 - \frac{V_{\phi}(\bfc_{X}, \min\{r, r_{X}\})}{V_{\phi}(\bfc_{X}, r_{X})} \Big)
			,
			A_{X} \frac{V_{\phi}(\bfc_{X}, \min\{r, r_{X}\})}{V_{\phi}(\bfc_{X}, r_{X})}
		\Big]
		\cap [0, 1],
	\end{align}
	with an analogous estimate for $\lambda_{Z}$, and the separation probability function $s_{X}$ satisfies
	\begin{align}\label{eq:classSeparationProbabilityEstimate}
		s_{X}(\delta)
		\in
		\Big[
			1 - A_{X} 
				\frac{C_{\phi}(\bfc_{X}, \bfc_{Z}, r_{X}, \delta)}{V_{\phi}(\bfc_{X}, r_{X})}
			,
			A_{X}
			\Big(
				1 - 
				\frac{C_{\phi}(\bfc_{X}, \bfc_{Z}, r_{X}, \delta)}{V_{\phi}(\bfc_{X}, r_{X})}
			\Big)
		\Big]
		\cap [0, 1],
	\end{align}
	with an analogous estimate holding for $s_{Z}$.
\end{thm}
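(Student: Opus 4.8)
The plan is to reduce each of the five probability functions to the probability, under the appropriate data distribution, that the sample lies in the $\phi$-preimage of a ball or of a spherical cap, and then to squeeze that probability between the two extremes produced by putting all of the density's mass inside, respectively outside, that preimage — subject to the uniform bound $\rho_{X} \leq A_{X}/V_{\phi}(\bfc_{X}, r_{X})$ from Definition~\ref{def:bounded_distribution} (and the analogue for $\rho_{Z}$). So the first step is to record the elementary consequence of Assumption~\ref{assume:distributionsBounded}: writing $\mathcal{S}_{X} = \{\bfx \in \Real^d : \|\phi(\bfx) - \bfc_{X}\| \leq r_{X}\}$ for the support of $\rho_{X}$, for every measurable $E \subseteq \Real^d$ one has
\[
	1 - \frac{A_{X}}{V_{\phi}(\bfc_{X}, r_{X})}\operatorname{vol}_d\big(\mathcal{S}_{X} \setminus E\big)
	\;\leq\;
	P\big(\bfx \sim P_{X} : \bfx \in E\big)
	\;\leq\;
	\frac{A_{X}}{V_{\phi}(\bfc_{X}, r_{X})}\operatorname{vol}_d\big(E \cap \mathcal{S}_{X}\big),
\]
simply because $P(\bfx \in E) = \int_{E \cap \mathcal{S}_{X}}\rho_{X}$ and $P(\bfx \in E) = 1 - \int_{\mathcal{S}_{X}\setminus E}\rho_{X}$, while $0 \leq \rho_{X} \leq A_{X}/V_{\phi}(\bfc_{X}, r_{X})$ on $\mathcal{S}_{X}$; and since a probability lies in $[0,1]$ anyway, both sides may be intersected with $[0,1]$. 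The same holds with every $X$ replaced by $Z$.

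For the localisation function, apply this with $E = \{\bfx : \|\phi(\bfx) - \bfc_{X}\| \leq r\}$: then $E \cap \mathcal{S}_{X} = \{\bfx : \|\phi(\bfx) - \bfc_{X}\| \leq \min\{r, r_{X}\}\}$ has volume $V_{\phi}(\bfc_{X}, \min\{r, r_{X}\})$, and $\mathcal{S}_{X}\setminus E$ has volume $V_{\phi}(\bfc_{X}, r_{X}) - V_{\phi}(\bfc_{X}, \min\{r, r_{X}\})$ exactly, since these two disjoint pieces exhaust $\mathcal{S}_{X}$; substituting gives~\eqref{eq:ballProbabilityEstimate}, and repeating over $P_{Z}$ gives the stated analogue for $\lambda_{Z}$. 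For the separation function I would take $E = \{\bfx : (\phi(\bfx) - \bfc_{X}, \bfc_{Z} - \bfc_{X}) \leq \delta\}$, so that $\mathcal{S}_{X}\setminus E$ is the $\phi$-preimage of the spherical cap defining $C_{\phi}(\bfc_{X}, \bfc_{Z}, r_{X}, \delta)$ and $E \cap \mathcal{S}_{X}$ is its complement inside $\mathcal{S}_{X}$, of volume $V_{\phi}(\bfc_{X}, r_{X}) - C_{\phi}(\bfc_{X}, \bfc_{Z}, r_{X}, \delta)$; this produces~\eqref{eq:classSeparationProbabilityEstimate}, and interchanging the two classes gives the analogue for $s_{Z}$.

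The projection function $p$ is the only one involving a second independent sample, and I would dispatch it by conditioning on $y$. Since $x$ and $y$ are independent, for each fixed value $\bfy$ of $y$ the event for $x$ is $\{\bfx : (\phi(\bfx) - \bfc_{X}, \phi(\bfy) - \bfc_{X}) \leq \delta\}$, which is of exactly the form just treated for $s_{X}$ but with the direction vector $\phi(\bfy) - \bfc_{X}$ replacing $\bfc_{Z} - \bfc_{X}$; the inequality above therefore gives, for every $\bfy \in \Real^d$,
\[
	1 - A_{X}\frac{C_{\phi}(\bfc_{X}, \phi(\bfy), r_{X}, \delta)}{V_{\phi}(\bfc_{X}, r_{X})}
	\;\leq\;
	P\big(\bfx \sim P_{X} : (\phi(\bfx) - \bfc_{X}, \phi(\bfy) - \bfc_{X}) \leq \delta\big)
	\;\leq\;
	A_{X}\Big(1 - \frac{C_{\phi}(\bfc_{X}, \phi(\bfy), r_{X}, \delta)}{V_{\phi}(\bfc_{X}, r_{X})}\Big).
\]
Bounding the right-hand side above by its supremum over $\bfy \in \Real^d$, bounding the left-hand side below by $1$ minus that same supremum, and then integrating over $y$ (which is legitimate by Fubini together with the independence of the two samples) leaves these bounds intact and yields~\eqref{eq:projectionProbabilityEstimate}.

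The one step where care is genuinely required — and essentially the only obstacle — is the passage from the inner-product events, phrased with ``$\leq$'', to the cap volumes $C_{\phi}$, which are defined with ``$\geq$'': for the $s_{X}$ and $p$ choices of $E$, the set $\mathcal{S}_{X}\setminus E$ coincides with the cap preimage only up to the level set $\{\bfx \in \mathcal{S}_{X} : (\phi(\bfx) - \bfc_{X}, \bfc_{Z} - \bfc_{X}) = \delta\}$, so the upper bounds above (and likewise for $p$) hinge on this level set being $d$-dimensionally Lebesgue-null. This holds in the non-degenerate case — for instance whenever $\bfc_{X} \neq \bfc_{Z}$ and the $\phi$-preimage of every affine hyperplane of $\mathbb{H}$ has zero Lebesgue measure — and I expect this to be the only place where either a remark or a mild non-degeneracy hypothesis must be inserted; the lower bounds involve $\{(\cdot) \geq \delta\}$ directly and so match $C_{\phi}$ with no correction, and everything else is routine volume bookkeeping.
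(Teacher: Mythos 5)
Your proposal is correct and takes essentially the same route as the paper's proof: in each case the probability is written as an integral of the density over the $\phi$-preimage of a ball or half-space, the bound $\rho_{X} \leq A_{X}/V_{\phi}(\bfc_{X}, r_{X})$ converts this into the stated volume ratios, and for $p$ one conditions on $\bfy$ (the paper writes the double integral directly) and passes to the supremum over $\bfy \in \Real^d$ before integrating out $\bfy$. The only point where you go beyond the paper is in flagging the level set $\{\bfx \in \mathcal{S}_{X} : (\phi(\bfx) - \bfc_{X}, \bfv - \bfc_{X}) = \delta\}$ needed for the upper bounds: the paper silently identifies the volume of $\{\leq \delta\} \cap \mathcal{S}_{X}$ with $V_{\phi}(\bfc_{X}, r_{X}) - C_{\phi}(\bfc_{X}, \bfv, r_{X}, \delta)$, i.e.\ implicitly assumes this level set is Lebesgue-null, so your remark is an extra precision on the same argument rather than a different approach.
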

\begin{proof}
	From the definition of $p$, we find that
	\begin{align*}
		p(\delta) &= P(x, y \sim P_{X} : (\phi(x) - c_{X}, \phi(y) - c_{X}) \leq \delta)
		= \int_{\Real^d} \int_{S(\phi(y), \delta) } \rho_{X}(\bfx) \rho_{X}(\bfy) d \bfx d \bfy.
	\end{align*}
	where $S(q, \delta) = \{ \bfx \, \suchthat \, (\phi(\bfx) - \bfc_{X}, q - \bfc_{X}) \leq \delta \}$.
	Since $\rho_{X}$ is only supported in a ball of radius $r_{X}$ in feature space, this may be further expressed as
	\begin{align*}
		p(\delta)
		&
		= 
		\int_{\{\bfy \, \suchthat \, \|\phi(\bfy) - \bfc_{X}\| \leq r_{X} \}} 
		\int_{\hat{S}(y, r, \delta)} 
		\rho_{X}(\bfx) d \bfx \, \rho_{X}(\bfy) d \bfy.
	\end{align*}
	where $\hat{S}(y, r, \delta) = \{ \bfx \, \suchthat \, \|\phi(\bfx) - \bfc_{X}\| \leq r_{X} \text{ and } (\phi(\bfx) - \bfc_{X}, \phi(\bfy) - \bfc_{X}) \leq \delta\}$.
	Applying the bound~\eqref{eq:def:density_growth_bound} on $\rho_{X}$, we further deduce that
	\begin{align*}
		p(\delta)
		&
		\leq
		\frac{A_{X}}{V_{\phi}(\bfc_{X}, r_{X})}
		\int_{\{\bfy \, \suchthat \, \|\phi(\bfy) - \bfc_{X}\| \leq r_{X} \}} 
		\int_{\hat{S}(y, r, \delta)} 
		1 d \bfx \, \rho_{X}(y) d \bfy,
	\end{align*}
	and recalling the definition of $C_{\phi}$ and the fact that $\int_{\Real^d} \rho_{X}(\bfy) d \bfy = 1$, we obtain
	$$
		p(\delta) \leq 
		\sup_{\bfy \in \Real^d}
		A_{X} \Big(1 - \frac{C_{\phi}(\bfc_{X}, \phi(\bfy), r_{X}, \delta)}{V_{\phi}(\bfc_{X}, r_{X})} \Big).
	$$
	Arguing similarly for
	$
		1 - p(\delta) 
		= P(x, y \sim P_{X} : (\phi(x) - c_{X}, \phi(y) - c_{X}) \geq \delta),
	$
	we obtain a lower bound on $p(\delta)$
	and~\eqref{eq:projectionProbabilityEstimate} therefore follows.

	To estimate $\lambda_{X}(r)$, we observe that
	\begin{align*}
		\lambda_{X}(r) &= 
		P(x \sim P_{X} : \|\phi(x) - c_{X}\| \leq r)
		= \int_{\{\bfx \, \suchthat \, \|\phi(\bfx) - \bfc_{X}\| \leq r\}} \rho_{X}(\bfx) d \bfx
		\leq
		A_{X} \frac{V_{\phi}(\bfc_{X}, \min\{r, r_{X}\})}{V_{\phi}(\bfc_{X}, r_{X})}.
	\end{align*}
	Arguing similarly for $1 - \lambda_{X}(r)$
	implies~\eqref{eq:ballProbabilityEstimate}.

	Turning to the class separation probability, and arguing as for $p(\delta)$, we have
	\begin{align*}
		s_{X}(\delta) 
		&= 
		P(x \sim P_{X} : x \in S(c_{Z}, \delta))
		=
		\int_{S(c_{Z}, \delta)} 
		\rho_{X}(\bfx) d \bfx
		\leq
		A_{X}
		\Big(
			1 -
			\frac{C_{\phi}(\bfc_{X}, \bfc_{Z}, r_{X}, \delta)}{V_{\phi}(\bfc_{X}, r_{X})}
		\Big),
	\end{align*}
	and the estimate~\eqref{eq:classSeparationProbabilityEstimate} follows by arguing similarly for $1 - s_{X}(\delta)$.
\end{proof}

Our results so far may therefore be summarised as follows:
\begin{center}
	\begin{tabular}{ c | c | c }
	 Term & For successful learning & Which is ensured by \\ 
	 \hline
	 $p(\delta)$ & $\to H(\delta)$ & $\frac{C_{\phi}(c_{X}, \phi(y), r_{X}, \delta r_{X})}{V_{\phi}(c_{X}, r_{X})} \to 1 - H(\delta)$ \\  
	 $\lambda_{X}(r)$ [sim. $\lambda_{Z}(r)$] & $\to 1$ & $\frac{V_{\phi}(c_{X}, r)}{V_{\phi}(c_{X}, r_{X})} \to 1$ \\
	 $s_{X}(\delta)$ [sim. $s_{Z}(\delta)$] & $\to H(\delta)$ & $\frac{C_{\phi}(c_{X}, c_{Z}, r_{X}, \delta)}{V_{\phi}(c_{X}, r_{X})} \to 1 - H(\delta)$

	\end{tabular}
\end{center}
where $H(\delta)$ denotes the Heaviside function.

Loosely speaking, our results in this context therefore involve quantities of the form\footnote{Note that the spherical cap volume may be estimated by enveloping the spherical cap in a sphere as in~\eqref{eq:capInSphereEstimate}.}
\[
	A_{X}\frac{V_{\phi}(\bfc, \epsilon r)}{V_{\phi}(\bfc, r)},
\]
for some $\epsilon \in (0, 1)$.
The interpretation of such a term is most clearly demonstrated when $\phi$ is simply the identity map. 
In this case $V_{\phi}(\bfc, r)$ is just the volume of a ball in $\Real^d$, and so
\[
	A_{X}\frac{V_{\phi}(\bfc, \epsilon r)}{V_{\phi}(\bfc, r)} = A_{X}\epsilon^d,
\]
and we observe exponential convergence of this quantity to 0 with respect to the data space dimension $d$.
A key question, therefore, is whether using nonlinear kernels offers any improvement over this rate,
which we investigate further in Section~\ref{sec:feature_space_estimates}.

\section{Estimates for families of feature maps}\label{sec:feature_space_estimates}
We now investigate the behaviour of the ratios appearing in Theorem~\ref{thm:geometricProbabilities}. 
In some cases we are able to perform this investigation analytically, whilst in others we turn instead to numerical simulations using the following algorithm.

\subsection{Numerical algorithm}\label{sec:numericalAlgorithm}
Defining the data domain to be a set $D \subset \Real^d$, we sample $k$ points $\{x_{i}\}_{i=1}^{k} \subset D$ from a distribution $P$ satisfying Definition~\ref{def:bounded_distribution}, and select the centre $c = \frac{1}{k}\sum_{i=1}^{k} \phi(x_{i})$ as their empirical mean, providing an approximation to the point $\hat{c} = \int_{D} \phi(x) \rho(x) dx$.
We also use this sample to estimate the minimal radius $r$ of the ball in feature space centred at $c$ such that $\operatorname{supp}(p) \subset S = \{ x \in \Real^d \suchthat \| \phi(x) - c \| \leq r \}$, as required by Definition~\ref{def:bounded_distribution}.
Taking a separate sample $\{y_{i}\}_{i=1}^{K}$ of $K$ points uniformly distributed in $\operatorname{supp}(p)$, we are able to use a simple Monte-Carlo approach to approximate the required volume ratios.
By computing 
$$
	L = | \{ y \in \{y_{i}\}_{i=1}^{K} : \| \phi(y) - c \| \leq \epsilon r \} |,
$$
we use the approximation
$
	\frac{V_{\phi}(c, \epsilon r)}{V_{\phi}(c, r)} \approx \frac{L}{K}.
$
Similarly, for $z \in D$ we define
$$
	M = | \{ y \in \{y_{i}\}_{i=1}^{K} : \| \phi(y) - c \| \leq r \text{ and } (\phi(y) - c, \phi(z) - c) \geq \delta \} |,
$$
and are therefore able to approximate
$
	\frac{C_{\phi}(c, \phi(z), r, \delta)}{V_{\phi}(c, r)} \approx \frac{M}{K}.
$

These quantities can be evaluated without explicitly knowing or evaluating the feature map or centre $c$ when $\phi$ is defined via a kernel $\kappa : \Real^d \times \Real^d \to \Real$ with $\kappa(x, y) = (\phi(x), \phi(y))$. 
This is important because it allows us to study kernels where the associated feature space is infinite dimensional.
We only need to evaluate
$$
	\|\phi(y) - c\|^2 = \kappa(y, y) - \frac{2}{k} \sum_{i=1}^{k} \kappa(y, x_i) + \frac{1}{k^2} \sum_{i=1}^{k} \sum_{j=1}^{k} \kappa(x_i, x_j),
$$
and
$$
	(\phi(y) - c, \phi(z) - c) = \kappa(y, z) - \frac{1}{k} \Big( \sum_{i=1}^{k} \kappa(y, x_i) + \sum_{i=1}^{k} \kappa(z, x_i) \Big) + \frac{1}{k^2} \sum_{i=1}^{k} \sum_{j=1}^{k} \kappa(x_i, x_j).
$$

\subsection{Polynomial kernels}
Consider the polynomial kernel given by
\[
	\kappa(\bfx, \bfy) = (b^2 + \bfx \cdot \bfy)^k
\]
for some $b \geq 0$ and integer $k \geq 1$.
In the simplest cases of $k = 1$ and $k = 2$ we are able to derive analytical estimates for $\frac{V_{\phi}(\bfc, \epsilon r)}{V_{\phi}(\bfc, r)}$, which behaves like $\epsilon^{d}$ or $\epsilon^{d/2}$ in these two cases respectively, showing the benefits of using quadratic kernels.
We then conduct a numerical investigation to show the behaviour of this ratio and the ratio
$\frac{C_{\phi}(\bfc, \bfv, \epsilon r, \delta)}{V_{\phi}(\bfc, r)}$
when using higher order kernels.

For the simplicity of the exposition, we introduce the \emph{multi-index} $\bfm \in \mathbb{Z}_{\geq 0}^d$, such that
$
	\bfx^{\bfm} = \prod_{i=1}^{d} \bfx_i^{\bfm_i},
$
with total degree given by $|\bfm| = \sum_{i=1}^{d} \bfm_i$.
We impose an (arbitrary but fixed) indexing $\{\bfm^i\}_{i=1}^{D}$ on the $D = \binom{d + k}{k}$ multi-indices $\bfm$ with $|\bfm| < k$.
This indexing is assumed to be such that for all $1 \leq i, j \leq D$ we have $\bfm^i = \bfm^j \iff i = j$ and such that 
$|\bfm^i| \leq |\bfm^j| \Rightarrow i \leq j$.

With this notation, we may express the feature map associated with the kernel $\kappa$ as the vector-valued function $\phi : \Real^d \to \Real^D$ with
\[
	[\phi(\bfx)]_{i} = \alpha(\bfm^{i}) \bfx^{\bfm^i},
\quad\text{where}\quad
	\alpha(\bfm) = \left( \binom{k}{k - |\bfm|} b^{2(k - |\bfm|)} \prod_{t=1}^{d} \binom{\sum_{j=t}^{d} \bfm_j}{\bfm_t} \right)^{1/2},
\]
and let $C \in \Real^{D \times D}$ be the diagonal coefficient matrix corresponding to $\phi$ given by
$
	C_{ii} = \alpha(\bfm^{i}).
$

We suppose that points $\bfx$ are sampled from a probability distribution $P$ satisfying Definition~\ref{def:bounded_distribution} with density $\rho(\bfx) : \Real^d \to \Real_{\geq 0}$, radius $r_P > 0$, and centre $\bfc \in \mathbb{H}$ given by the expectation
$
	\bfc = \int_{\Real^d} \phi(\bfy) \rho(\bfy) d \bfy.
$
This choice of $c$ implicitly assumes that all moments of $\rho$ up to degree $k$ are finite, and we define $M(\rho) \in \Real^D$ where
$
	[M(\rho)]_i = \int_{\Real^d} \bfy^{\bfm^i} \rho(\bfy) d \bfy.
$

We now wish to estimate $V_{\phi}(c, r)$ for $r \geq 0$, defined in~\eqref{eq:ballPreimageVolume}.
Expanding the norm and applying the definition of the kernel $\kappa$, we find that
\begin{equation}\label{eq:polynomial:generalIndicator}
	\|\phi(\bfx) - \bfc\| \leq r 
	\, \iff \, 
	(b^2 + |\bfx|^2)^k - 2 M(\rho) C^{\top} \phi(\bfx) + \| CM(\rho) \|^2 - r^2 \leq 0,
\end{equation}
and the set of such $\bfx$ is therefore always bounded since the leading order term $|\bfx|^{2k}$ has a positive coefficient, and its boundary is the level set of a polynomial function.

When $P$ is a uniform distribution over the cube $[-L, L]^d$, we may compute $M(\rho)$ as
\begin{equation}\label{eq:momentValues}
	[M(\rho)]_{\bfm} = 
	\int_{[-L, L]^d} \bfx^{\bfm} \rho(\bfx) d \bfx =
	\begin{cases}
		0 &\text{ if any component of $\bfm$ is odd},
		\\
		\frac{L^{|\bfm|}}{\prod_{i=1}^{d} (\bfm_i + 1)} &\text{ otherwise}.
	\end{cases}
\end{equation}
This implies $[M(\rho)]_{\bfm}$ is only non-zero when $|\bfm|$ is even or zero, since if $|\bfm|$ is odd then $\bfm$ must have an odd component.

\subsubsection{Linear kernels}
For a linear kernel (i.e. $k = 1$), \eqref{eq:momentValues} implies that
$\|CM(\rho)\|^2 = b^2$, and it then follows that $\|\phi(\bfx) - \bfc\| \leq r$ if and only if
$
	|\bfx| \leq r,
$
and therefore
$$
	\frac{V_{\phi}(\bfc, \epsilon r)}{V_{\phi}(\bfc, r)}
	=
	\epsilon^{d}.
$$

\subsubsection{Quadratic kernels}

Next consider the quadratic case when $k=2$.
Evaluating the entries of the moment vector using~\eqref{eq:momentValues} implies that
$
	\|C M(\rho)\|^2 = \frac{dL^4}{9} + b^4,
$
due to the definition of $C$.
In this case~\eqref{eq:polynomial:generalIndicator} may be rearragned to give
\[
	|\bfx|^4 + 2 \Big( b^2 - \frac{L^2}{3} \Big) |\bfx|^2  - 2b^2 + \frac{dL^4}{9} + 2b^4 - r^2 \leq 0,
\]
and consequently we conclude that $\|\phi(x) - \bfc\| \leq r$ if and only if
\[
	|\bfx|^2 \in \Big[ 
		\max\Big\{0, \frac{L^2}{3} - b^2 - \xi \Big\}
		,
		\frac{L^2}{3} - b^2 + \xi
	\Big],
	\text{ where }
	\xi^2 = r^2 - b^4 + 2\Big(1 - \frac{L^2}{3} \Big)b^2 - \frac{(d - 1)L^4}{9},
\]
which may take the form of either a ball or an annulus in the data space $\Real^d$.
Since we have assumed that $P$ is a uniform distribution over a cube, it follows that the dataset cannot be contained in an annulus centred at the origin, and therefore
$$
	\frac{V_{\phi}(\bfc, \epsilon r)}{V_{\phi}(\bfc, r)} 
	\leq 
	\frac{
		\Big( 
		\frac{L^2}{3} - b^2 + \Big( \epsilon^2 r^2 - b^4 + 2\Big(1 - \frac{L^2}{3} \Big)b^2 - \frac{(d - 1)L^4}{9} \Big)^{1/2}
	\Big)^{d/2}
	}{
		\Big( 
		\frac{L^2}{3} - b^2 + \Big( r^2 - b^4 + 2\Big(1 - \frac{L^2}{3} \Big)b^2 - \frac{(d - 1)L^4}{9} \Big)^{1/2}
	\Big)^{d/2}
	}.
$$
The dependence on $\epsilon$ is exposed in the simplifying case when $L = \frac{1}{\sqrt{3}}$ and $b = 1$, and by writing $r^2 = (1 + \delta)d$ for some $\delta \geq -1$, implying that
$$
	\frac{V_{\phi}(\bfc, \epsilon r)}{V_{\phi}(\bfc, r)} 
	\leq 
	\big( 
		\epsilon^2(1 + \delta^{-1}) - \delta^{-1}
	\big)^{d/4},
$$
and therefore for $\delta$ sufficiently large this bound will behave as $\epsilon^{d/2}$.
This is less restrictive than the order $\epsilon^d$ bound obtained for the linear kernel, which implies a potential advantage may be obtained by using higher order kernels.
Since the values of $\lambda_{X}(r)$ and $\lambda_{Z}(r)$ are tied to this ratio, the bound suggests that the quadratic feature map is capable of producing feature vectors which are more closely concentrated around their mean than those produced by the linear feature map.
Of course, the quality of the learning which may be expected is also related to the cap volume ratio, an analytical treatment of which is beyond the scope of this article.
Our numerical investigation (discussed in Section~\ref{sec:polynomial_numerics}) investigates both these ratios in further detail, also incorporating higher degree kernels.

\subsubsection{Numerical investigation}\label{sec:polynomial_numerics}

\begin{figure}[h!]
    \centering
	\begin{subfigure}{0.9\textwidth}
    	\includegraphics[width=\textwidth]{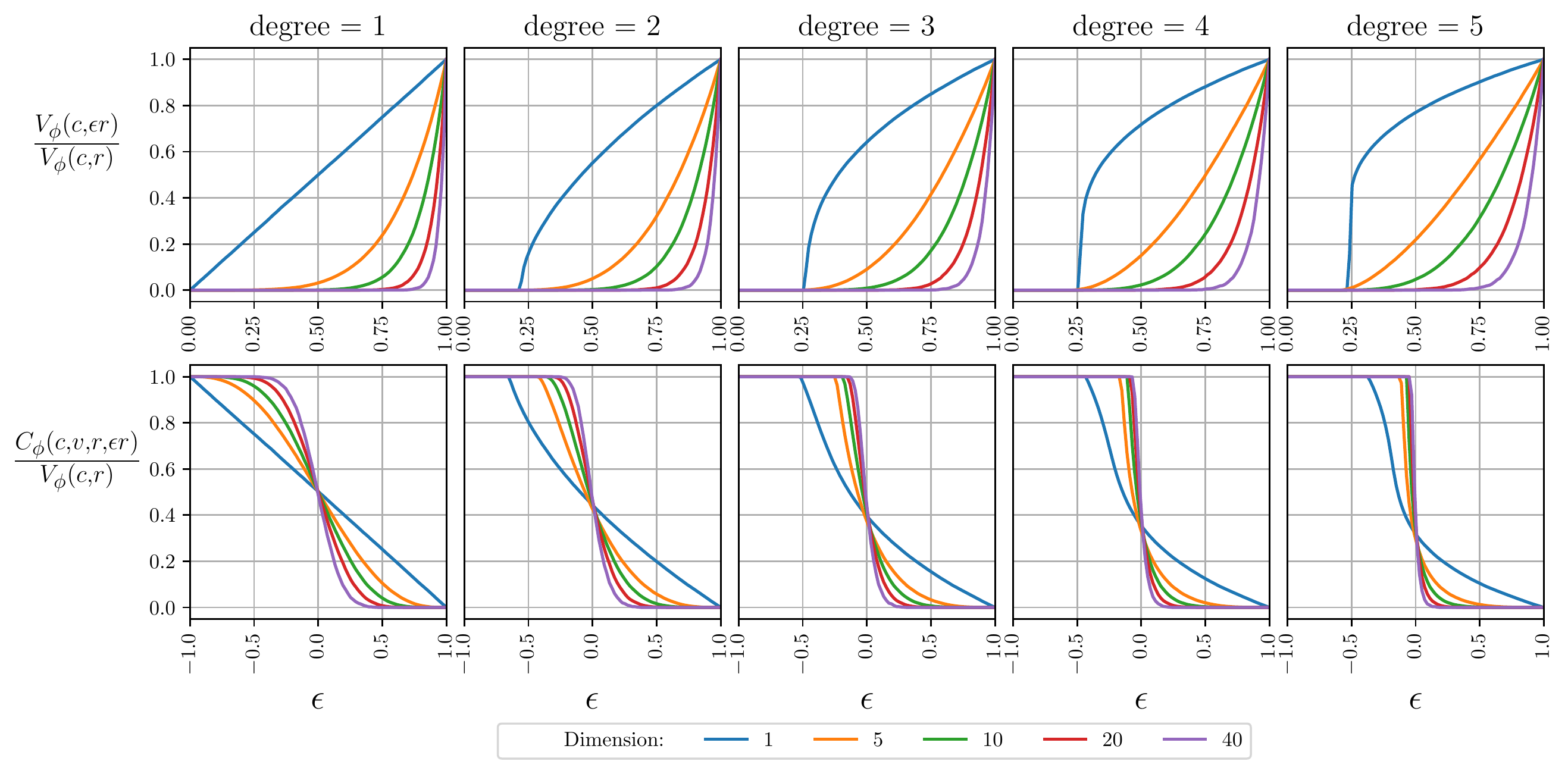}
		\caption{Effects of varying data space dimension with fixed polynomial degree.}
	\end{subfigure}

	\begin{subfigure}{0.9\textwidth}
    	\includegraphics[width=\textwidth]{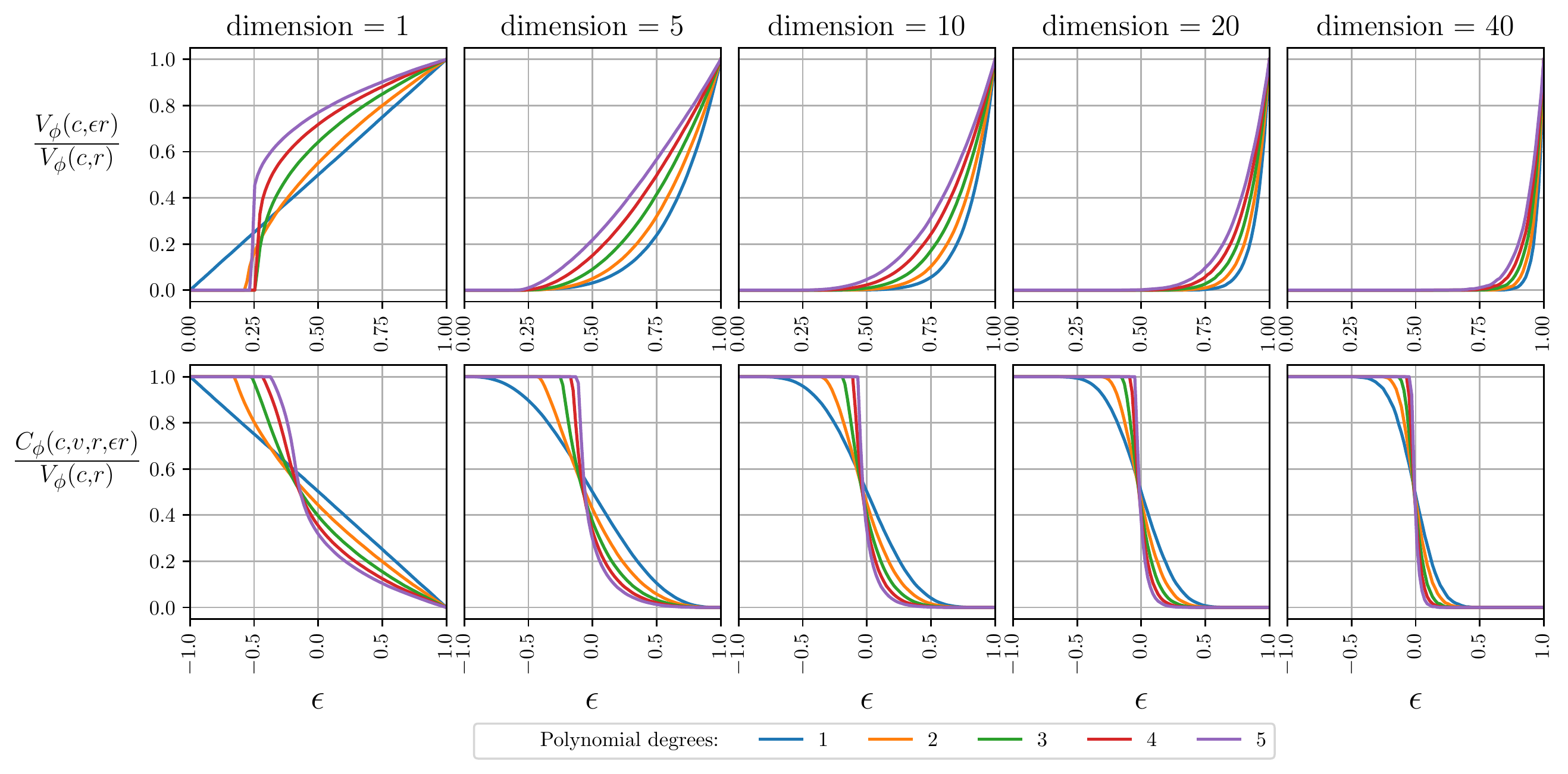}
		\caption{Effects of varying polynomial degree with fixed data space dimension.}
	\end{subfigure}
    \caption{Numerical estimates of volume ratios for polynomial kernels of varying degrees in a variety of different data space dimensions. Both figures show the same data presented differently to facilitate comparison.}
    \label{fig:polynomial:volumeRatios}
\end{figure}

The investigation is continued numerically to incorporate higher polynomial degrees and to estimate the spherical cap volume $C_{\phi}$.
The results are presented in Figure~\ref{fig:polynomial:volumeRatios}, and were computed using the algorithm described in Section~\ref{sec:numericalAlgorithm} with $k = 10^3$, $K = 10^5$, and $P$ taken as a uniform distribution in the unit ball in $\Real^d$.
The bias parameter of the kernel was taken as $1$ in all cases, and we selected $v = \phi(y)$ where $y \in \Real^d$ has  has first component 1 and is zero otherwise.

Recall that to ensure successful learning we want the ball volume ratio $\frac{V_{\phi}(c, r)}{V_{\phi}(c, r)}$, measuring how tightly clustered the data is in feature space, close to $1$.
We observe that, for a fixed data space dimension $d$, increasing the polynomial degree of the kernel does indeed drive the ball volume ratio closer to 1, implying that learning becomes easier in higher order feature spaces.
This extends the same trend shown theoretically for linear and quadratic kernels above.
On the other hand, the ratio decreases as the dimension of the data space increases for fixed polynomial degree.
This may be expected, since the volume of a ball in a high dimensional space concentrates around its surface, and the feature space dimension $D = \binom{d + k}{k}$ for a polynomial kernel of degree $k$ grows quickly with $d$.

Out theoretical results also assert that for successful learning we want the spherical cap volume ratio $\frac{C_{\phi}(c, \phi(y), r, \epsilon r)}{V_{\phi}(c, r)}$, which estimates the expected degree of quasi-orthogonality and potential for class separation, to behave like $1 - H(\epsilon)$ where $H$ is the Heaviside function.
The results show that, at least within the range of parameters and dimensions explored, this occurs as either the dimension of the data space or the degree of the polynomial kernel are increased.

To conclude this investigation, we observe that, for the range of parameters and dimensions assessed in our experiments, while increasing the data space dimension only improves $s_{X}, s_{Z}$ and $p$, increasing the kernel degree also improves $\lambda_{X}$ and $\lambda_{Z}$.

\subsection{Gaussian kernels}

Consider now the kernel with the form
\[
	\kappa(\bfx, \bfy) = \exp \Big(-\frac{1}{2\sigma}|\bfx - \bfy|^2 \Big),
\]
for some $\sigma > 0$,
which induces a feature map $\phi$ mapping $\Real^d$ onto the unit sphere in the (countably infinite dimensional) space $\ell^2$.
The behaviour of $V_{\phi}(\bfc, r)$ is quite subtle, as balls in feature space can encompass the whole of the image of $\Real^d$ for even finite $r$.
For example, consider the case when $\bfc = 0 \in \ell^2$. 
Since for $r < 1$ the ball $\|\phi(\bfx) - 0\| < r$ does not intersect with the unit sphere, while for $r \geq 1$ it contains the whole sphere, we find that
\begin{equation}\label{eq:gaussian:zeroCentred}
	V_{\phi}(0, r) =
	\begin{cases}
		0 \quad &\text{for } r < 1,
		\\
		\infty &\text{otherwise}.
	\end{cases}
\end{equation}

Alternatively, taking $\bfc = \phi(\bfy)$ for some $\bfy \in \Real^d$, 
we may compute $V_{\phi}(\phi(\bfy), r)$ as $\kappa$ implies
\[
	\| \phi(\bfx) - \phi(\bfy) \|^2 \leq r^2
	\iff
	|\bfx - \bfy|^2 \leq -2 \sigma \log \Big(1 - \frac{1}{2}r^2 \Big),
\]
for $r \leq \sqrt{2}$.
For $r > \sqrt{2}$, any points $\bfx, \bfy \in \Real^d$ satisfy $\|\phi(\bfx) - \phi(\bfy)\| < r$, implying that $\phi$ is only mapping to the intersection of the sphere with a simplex in $\ell^2$.
Thus, for any $\bfy \in \Real^d$,
\[
	V_{\phi}(\phi(\bfy), r) = 
	\begin{cases}
		\frac{\pi^{d/2}}{\Gamma(\frac{d}{2} + 1)}\big(-2 \sigma \log \big(1 - \frac{1}{2}r^2 \big) \big)^{d/2}
		\quad&
		\text{for } r^2 \leq 2,
		\\
		\infty &\text{otherwise}.
	\end{cases}
\]

\subsubsection{Numerical investigation}

\begin{figure}
    \centering
	\begin{subfigure}{0.5\textwidth}
    	\includegraphics[width=\textwidth]{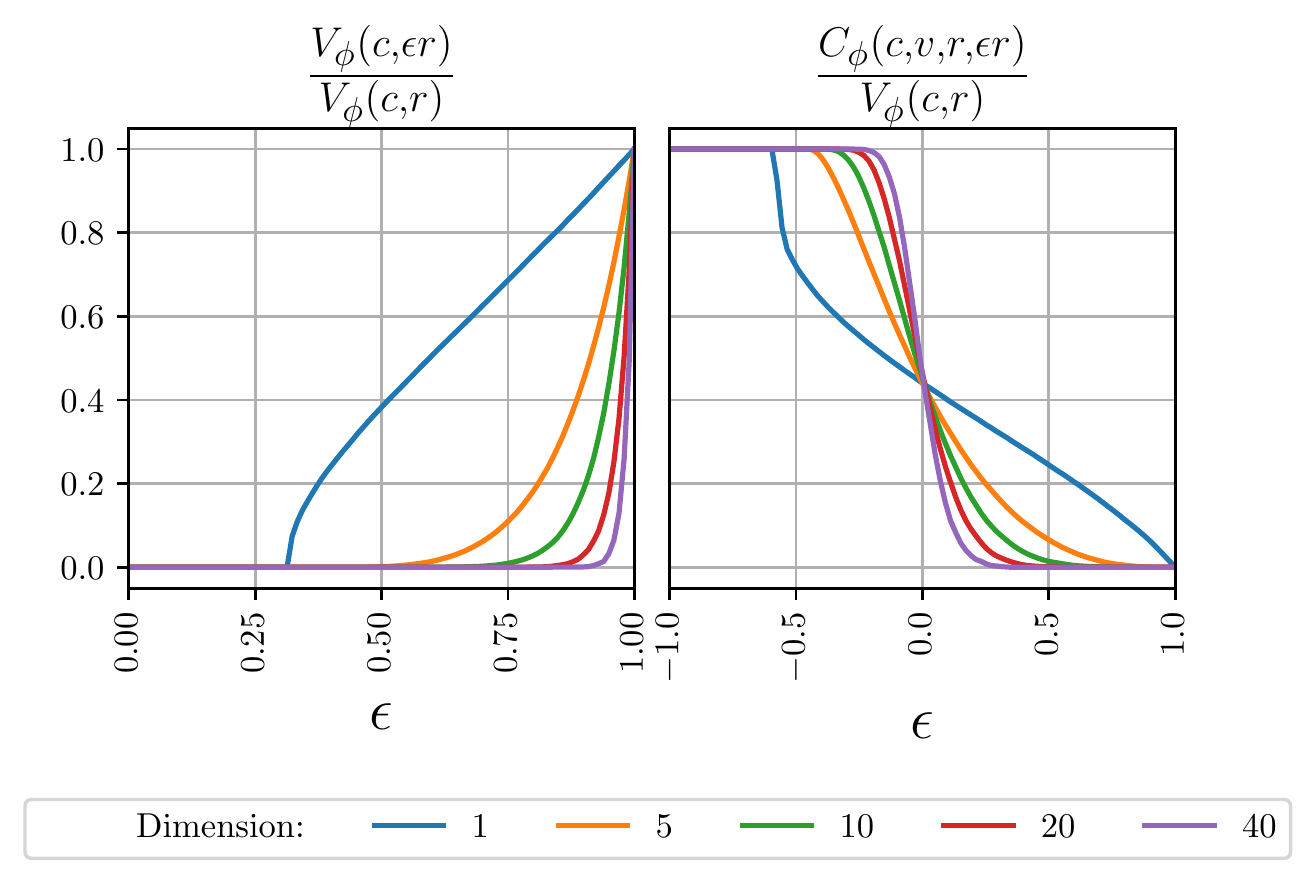}
		\caption{With $\sigma = 1$.}
	\end{subfigure}%
	\begin{subfigure}{0.5\textwidth}
    	\includegraphics[width=\textwidth]{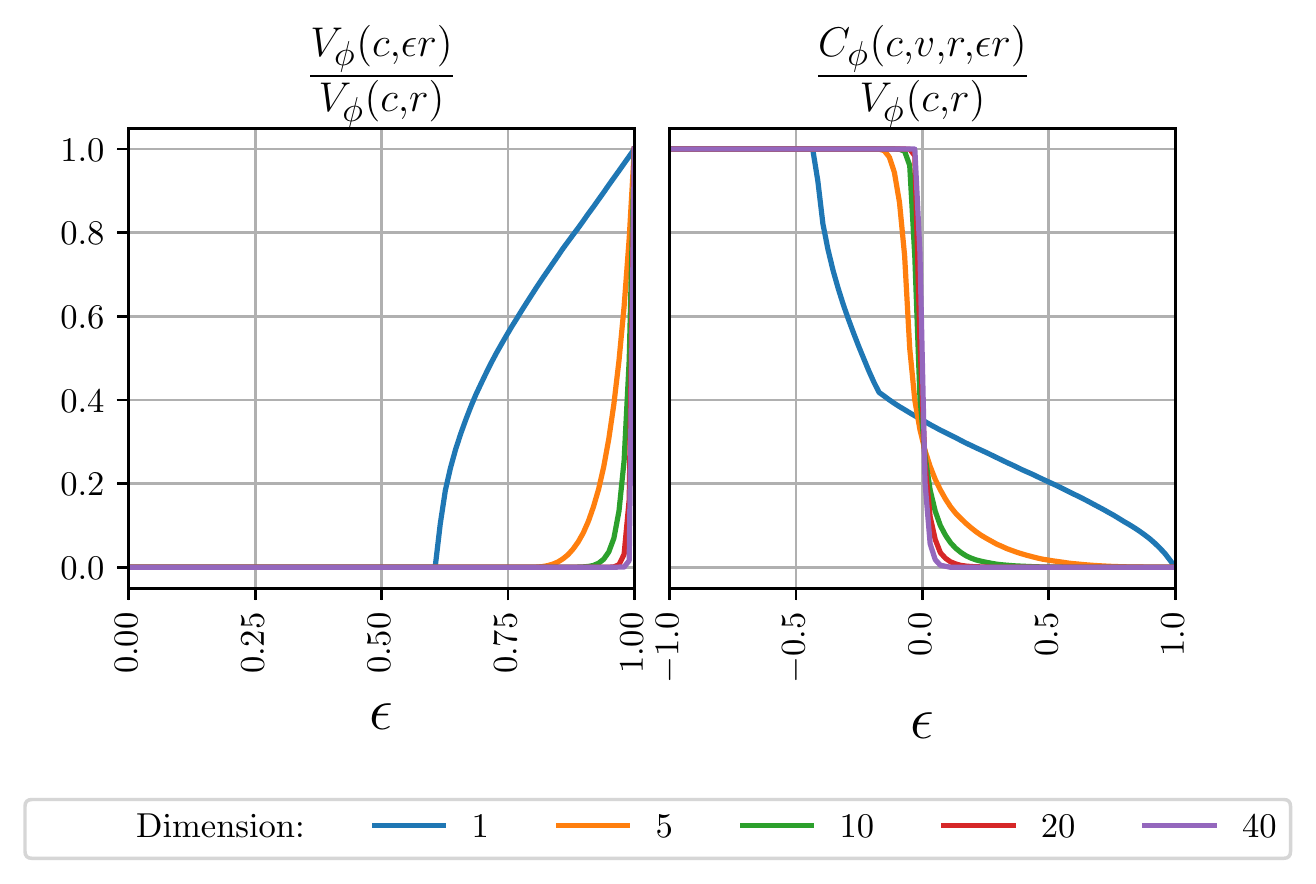}
		\caption{With $\sigma = \frac{1}{4}$.}
	\end{subfigure}
    \caption{Numerical estimates of volume ratios for the Gaussian kernel in a variety of different data space dimensions for different values of the parameter $\sigma$.}
    \label{fig:gaussian:volumeRatios}
\end{figure}

The results of investigating these quantities numerically using the algorithm described in Section~\ref{sec:numericalAlgorithm} are presented in Figure~\ref{fig:gaussian:volumeRatios}.
As before, these results were computed using $k = 10^3$ and $K = 10^5$, with $P$ taken as a uniform distribution in the unit ball in $\Real^d$.
We selected $v = \phi(y)$ where $y \in \Real^d$ has first component 1 and is zero otherwise.

The theoretical results show that for successful learning we want the spherical cap volume ratio $\frac{C_{\phi}(c, \phi(y), r, \epsilon r)}{V_{\phi}(c, r)}$, which estimates the degree of quasi-orthogonality expected in feature space, to behave like $1 - H(\epsilon)$ where $H$ is the Heaviside function.
This is precisely the behaviour we observe empirically, implying that points are close to orthogonal in feature space.
On the other hand, recall that we want the ball volume ratio $\frac{V_{\phi}(c, \epsilon r)}{V_{\phi}(c, r)}$, measuring how tightly clustered the data is in feature space, close to 1.
As in the polynomial case, this ratio decreases rapidly as the data space dimension $d$ increases, implying that the points in feature space are spread further apart as the dimension of the data space increases.

Decreasing $\sigma$ increases the degree of orthogonality of the images of points in the feature space, but it also spreads points further from the empirical mean $\mu$.
This is because
$$
	\|\mu\|^2 = \frac{1}{k^2} \sum_{i, j = 1}^{k} \exp\Big(-\frac{1}{2\sigma} |x_i - x_j|^2 \Big) 
	\to
	\begin{cases}
		\frac{1}{k} &\text{ as } \sigma \to 0,
		\\
		\frac{1}{k} + \Big( 1 - \frac{1}{k} \Big) \exp\Big(-\frac{1}{\sigma}\Big) &\text{ as } d \to \infty,
	\end{cases}
$$
since the $i \neq j$ terms will be negligible when $\sigma$ is small, and for large $d$ we expect points sampled uniformly from the unit ball to be near orthogonal, and therefore for $\|x_i - x_j\|$ to concentrate around $\sqrt{2}$.
This also explains why, for $d = 1$, we see $V_{\phi}(c, \epsilon r) = 0$ for $\epsilon < 0.25$ when $\sigma = 1$ and for $\epsilon < 0.55$ for $\sigma = \frac{1}{2}$: since $\phi$ maps the data to the surface of the unit sphere in $\ell^2$, there is a range of radii such that a sphere centred at $\mu$ does not contain any mapped data.
The implication of this phenomenon is that we may find ourselves in the situation of~\eqref{eq:gaussian:zeroCentred} as $\|\mu\|$ approaches zero, and therefore successful learning becomes less likely.

A key conclusion from this investigation is that while using Gaussian kernels with a small variance parameter $\sigma$ introduces significant orthogonality to the data in feature space, it is not clear that this is necessarily beneficial for learning as it simultaneously decreases the degree of localisation of the data.
Roughly speaking, this might imply that although the data become very easy to separate in feature space, the lack of localisation could render it difficult to actually \emph{learn} from the data as disparate points have little in common.

\subsection{Neural networks}
We now turn to look at using the feature space of a neural network as the nonlinear mapping in which we perform few shot learning, applied to the MNIST dataset of handwritten digits.
A relatively simple convolutional neural network was used for this task, the details of which are described in Table~\ref{table:NNparameters}, which as a reference was able to classify the MNIST dataset in a conventional setting with approximately 99\% test accuracy.
The Keras API for Tensorflow version 2.8~\cite{tensorflow} was used for the experiments in this section.

\begin{table}
	\centering
	\begin{tabular}{c|c|c}
		Layer type & Layer output shape & Parameters
		\\
		\hline    															
		Normalisation & $28 \times 28$    &    0         
		\\																
		$5 \times 5$ Convolution       & $24 \times 24 \times 20$    &    520       
		\\																
		Max pooling & $12 \times 12 \times 20$   &    0         
		\\																
		$5 \times 5$ Convolution      &   $8 \times 8 \times 20$      &    10,020     
		\\																
		$3 \times 3$ Convolution     &   $6 \times 6 \times 40$      &    7,240      
		\\																
		Global average pooling  &  40           &    0                 
		\\																			
		Dense        &    50            &    1,230      
		\\																
		Dense       &     9            &    310     
	\end{tabular}
	\caption{The architecture of the neural network trained on the MNIST dataset, with a total of 20,340 parameters. All layers use a RELU activation function except the last which uses softmax to provide a class output. An $\ell^2$ regulariser is applied to the convolutional layers. The normalisation layer also incorporates random rotations and translations to aid training.}
	\label{table:NNparameters}
\end{table}

To simulate the few shot learning problem, we removed a digit from the training set, and trained the network from a random initialisation for 700 epochs on just the remaining 9 digits with a sparse categorical cross entropy loss.
The output of the network in the penultimate, 50 dimensional, dense layer was defined to be our feature space, and the network mapping raw images into this 50 dimensional space was regarded as the feature mapping $\phi$.
The feature vectors of 10 randomly selected examples of the missing digit were used as the training sample for the few shot learning algorithm.
The feature vectors of the old and new classes were normalised in feature space by translating the mean of the old training data to the origin and scaling by the maximum norm observed in the either training set.

In this setting we are able to consider the binary classification problem of recognising whether an image contains an example of the new digit, or of one of the 9 old digits.
To solve this problem, the empirical mean $\mu$ of the 10 examples of the additional digit was used to define the linear classifier~\eqref{eq:linearClassifier} in the feature space, where the centre $c_{Z}$ of the old data class was taken as the empirical mean of the (abundant) training data for the 9 previously learned digits.
For each digit, we computed ROC curves with respect to the threshold parameter $\theta$ of the classifier for the unseen test set of each of the various digit classes.
Calculating the area under this curve then provides a parameter-independent measure of the performance of the classifier.
This process was repeated using 20 different random samples of images from the extra class to train the classifier, and average values of the areas under the resulting ROC curves were computed.
The results are reported in Figure~\ref{fig:digit_aurocs}, corresponding to degree $k = 1$.

\begin{figure}
    \centering
	\begin{subfigure}{0.58\textwidth}
		\includegraphics[width=\textwidth]{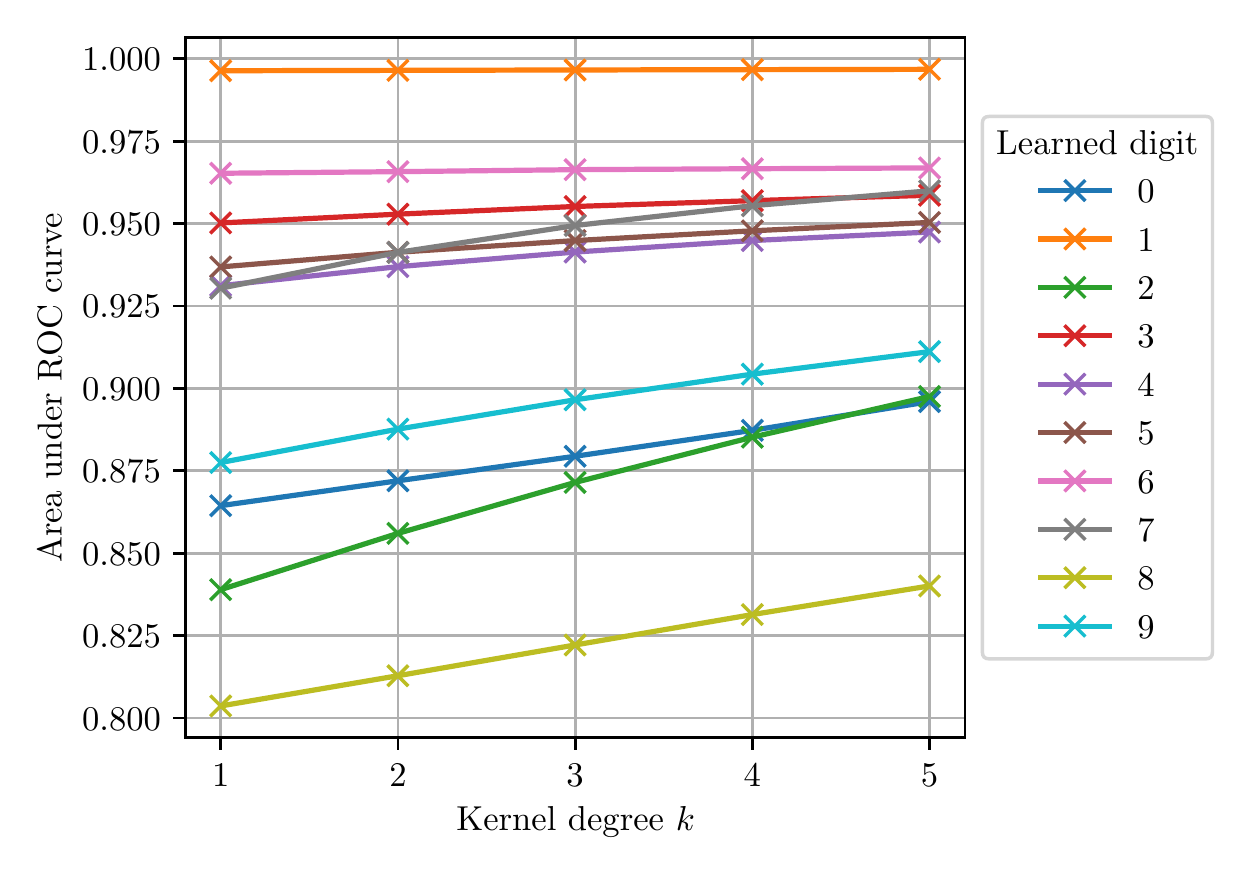}
		\caption{Average area under ROC curves.}
	\end{subfigure}%
	\begin{subfigure}{0.42\textwidth}
    	\includegraphics[width=\textwidth]{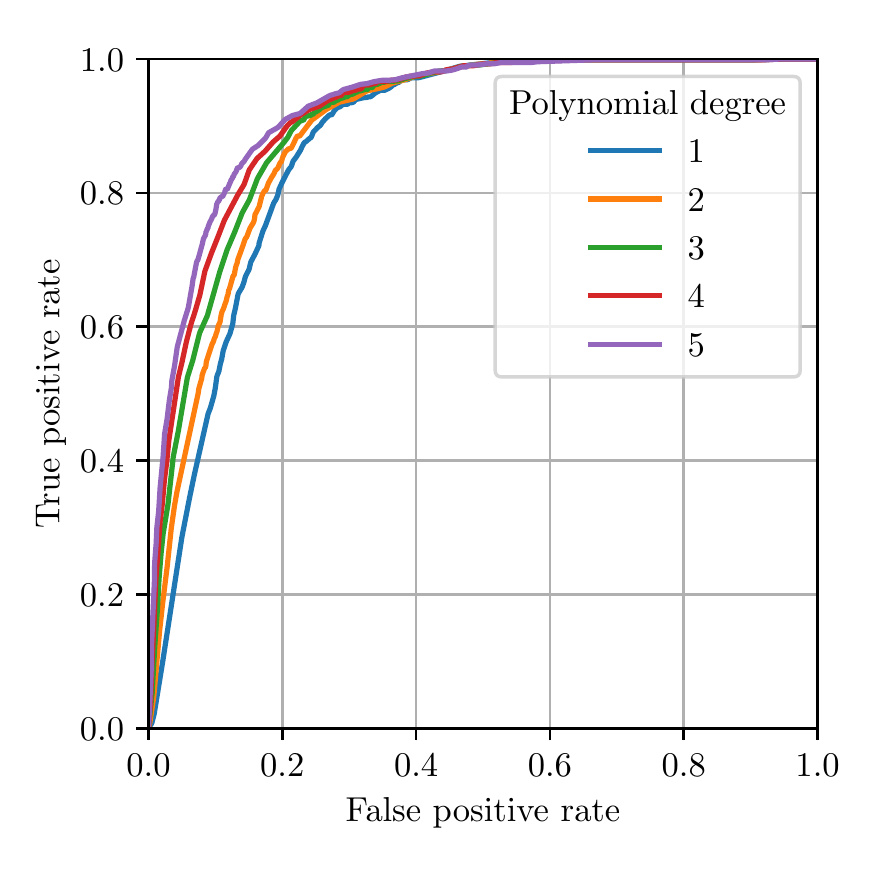}
		\caption{Sample ROC curves for learning digit 9.}
	\end{subfigure}
	\caption{ROC data obtained when learning each MNIST digit using the few-shot learning algorithm in a neural network feature space with various degree $k$ polynomial kernels. Degree $k = 1$ corresponds to learning directly in the network feature space. Left: Area under ROC curves, averaged over 20 few-shot training samples, each containing 10 examples. Right: ROC curves obtained for a single representative sample when learning the digit 9.}
	\label{fig:digit_aurocs}
\end{figure}

We repeated these experiments using polynomial kernels in the network feature space, to investigate whether this would further improve the performance of the classifier.
In this case, the complete feature map can be considered to be the composition of the network feature map, followed by the normalisation step outlined above, followed by the polynomial feature map.
The results from these experiments are also reported in Figure~\ref{fig:digit_aurocs}.
To make the results fully comparable, the same random training samples were used for the experiments with no additional kernel and with each of the polynomial kernels.

The results demonstrate that the proposed few shot learning algorithm works well, with a high probability of both successfully learning the new digit and of recalling the previous training to correctly classify the remaining digits.
Moreover, incorporating the polynomial kernel appears to make the two classes more easily separable and increasing the polynomial degree increases the area under the ROC curve, as predicted by the experiments on polynomial feature maps in the previous sections.

\section{Conclusion}\label{sec:conclusion}

This paper provides, for the first time, a comprehensive theoretical treatment of the challenging problem of learning from few examples.
The main thrust of our work is to investigate whether applying nonlinear feature mappings to the data can accelerate the onset of the blessing of dimensionality.
By explicitly incorporating these nonlinear mappings, widely used in practice but frequently neglected in theoretical work, we have been able to reveal their fundamental relationships with the geometry of the data distributions which, if preserved, can ensure successful learning without catastrophically forgetting previously learned tasks.
The consequences of these abstract results have been investigated in detail, both analytically and numerically, including in the feature space formed by neural networks trained on an image classification task.

Yet this remains just a beginning and many key open questions remain, which we plan to tackle in future work.
For instance, it could be beneficial to incorporate these principles into the design of future AI models in such a way as to make it easier to learn extra classes in future.
Similarly, it would be beneficial to develop tools to cheaply assess the suitability of a given AI model for learning new classes.

\section*{Acknowledgements}

The authors are grateful for financial support by  the UKRI and EPSRC (UKRI Turing AI Fellowship ARaISE EP/V025295/1). I.Y.T. is also grateful for support from the UKRI Trustworthy Autonomous Systems Node in Verifiability EP/V026801/1.

\bibliographystyle{plain}
\bibliography{references}

\appendix

\section{Notation}\label{sec:notation}
Throughout, we use the following notation:
\begin{itemize}
	\item {$\Real$ denotes the field of real numbers, $\Real_{\geq 0}=\{x\in\Real : x\geq 0\}$, and} $\Real^d$ stands for the $d$-dimensional linear real vector space;
	\item $\Natural$ denotes the set of natural numbers;
	\item bold symbols $\boldsymbol{x} =(x_{1},\dots,x_{n})$ will denote elements of $\Real^d$;
	\item $(\boldsymbol{x},\boldsymbol{y})=\sum_{k} x_{k} y_{k}$ is the inner product of $\boldsymbol{x}$ and $\boldsymbol{y}$, and $\|\boldsymbol{x}\|=\sqrt{(\boldsymbol{x},\boldsymbol{x})}$ is the standard Euclidean norm  in $\Real^d$;
	\item  $\mathbb{B}_d$ denotes the unit ball in $\Real^d$ centered at the origin:
	\[\mathbb{B}_d=\{\boldsymbol{x}\in\Real^d \suchthat {\|\boldsymbol{x}\|\leq 1}\};\]
	\item  $\mathbb{B}_d(r,\bfy)$  stands for the ball in $\Real^d$ of radius ${r> 0}$ centered at $\bfy$: 
	\[\mathbb{B}_d(r,\bfy)=\{\boldsymbol{x}\in\Real^d \suchthat {\|\boldsymbol{x}-\bfy\|\leq r}\};\]
	\item $V_d$ is the $d$-dimensional Lebesgue measure, and $V_d(\mathbb{B}_d)$ is the volume of unit $d$-ball;
	\item $\{x\}_{+}$ denotes the non-negative part of the argument $x$, given by $\{x\}_{+} = \max\{0, x\}$
\end{itemize}

\end{document}